\documentclass{article}

\usepackage{arxiv}

\usepackage[utf8]{inputenc}
\usepackage[T1]{fontenc}
\usepackage{textcomp}
\usepackage{natbib}
\usepackage{hyperref}
\usepackage{url}
\usepackage{booktabs}
\usepackage{amsfonts}
\usepackage{amsmath}
\usepackage{amssymb}
\usepackage{amsthm}
\usepackage{nicefrac}
\usepackage{microtype}
\usepackage{graphicx}
\usepackage{multirow}
\usepackage{float}
\usepackage{placeins}
\usepackage[table]{xcolor}
\usepackage{tcolorbox}

\graphicspath{{./images/}}

\definecolor{graybg}{HTML}{F0F0F0}

\newtheorem{lemma}{Lemma}
\theoremstyle{remark}

\title{
REIN: Bridging the Gap between Reasoning and Reliability via
Reflection and Abstention Alignment
}

\author{
\textbf{Zhengze Huang\textsuperscript{1,*}} \quad
\textbf{Luyang Yu\textsuperscript{2,*}} \quad
\textbf{Di Hong\textsuperscript{1}} \quad
\textbf{Xinzhe Huang\textsuperscript{1}} \\
\textbf{Wanyu Lin\textsuperscript{3}} \quad
\textbf{Zhixuan Chu\textsuperscript{1}} \quad
\textbf{Zhan Qin\textsuperscript{1,\textdagger}} \quad
\textbf{Tianhang Zheng\textsuperscript{1,\textdagger}} \\
\textsuperscript{1}The State Key Laboratory of Blockchain and Data Security, Zhejiang University \\
\textsuperscript{2}Fudan University \\
\textsuperscript{3}The Hong Kong Polytechnic University \\
\textsuperscript{*}Equal contribution \quad
\textsuperscript{\textdagger}Corresponding author
}

\begin{document}

\maketitle
\begin{abstract}
Large reasoning models (LRMs) are prone to hallucination, which undermines their reliability and poses challenges for safe deployment.
Hallucinations in LRMs arise from two distinct failure sources: reasoning hallucination, where flawed inference steps propagate to an incorrect conclusion, and knowledge hallucination, where the model lacks the requisite factual knowledge to answer the query.
To address reasoning hallucination, we propose REIN, an alignment framework that trains LRMs to produce a structured reasoning sequence, \texttt{<think>} $\rightarrow$ \texttt{<reflection>} $\rightarrow$ \texttt{<answer>}, enabling explicit self-reflection before committing to a final answer.
To address knowledge hallucination, REIN introduces a reward mechanism that encourages explicit abstention (e.g., "I don't know") when none of the sampled reasoning chains yields a correct answer, allowing the model to refrain from unsupported predictions.
Extensive evaluations on mathematical and commonsense reasoning benchmarks show that REIN consistently improves selective accuracy, reduces incorrect-but-self-endorsed responses, and maintains high coverage compared with competitive baselines.
Notably, REIN achieves these gains within a single forward pass, without requiring process supervision, inference-time controllers, external search, or multi-round critiques. Experiments on multiple backbones show that REIN reduces the hallucination proxy by $58\sim72\%$ relative to the base models while maintaining $86\sim91\%$ average coverage, and improves selective accuracy on attempted questions by $6.6\sim14.2\%$.
\end{abstract}

\begin{figure*}[!t]
    \centering
    \includegraphics[width=\textwidth]{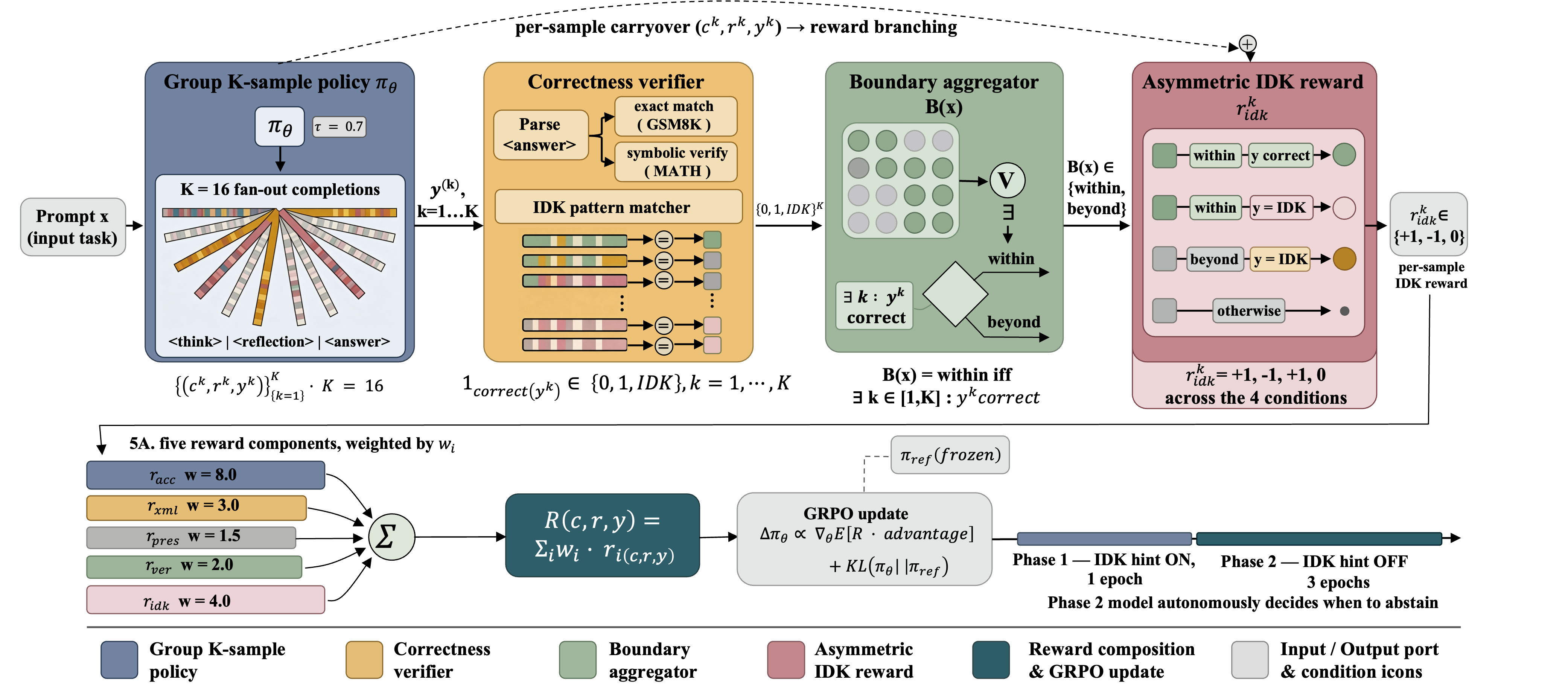}
    \caption{
        Overview of the REIN training pipeline: reflection-augmented
        completions feed into GRPO with reflection and boundary-aware
        abstention rewards.
    }
    \label{fig:method}
\end{figure*}

\section{Introduction}
  \label{sec:intro}

  Large reasoning models (LRMs) such as DeepSeek-R1~\citep{deepseekai2025deepseekr1incentivizingreasoningcapability} and Qwen3~\citep{qwen3technicalreport} have made substantial progress on mathematics, scientific reasoning, and programming~\citep{hendrycks2021, zheng2025scalingphysicalreasoningphysics, chen2025r1codeinterpreterllmsreasoncode} by eliciting explicit chains of thought ~\citep{wei2023chainofthoughtpromptingelicitsreasoning}. Yet explicit reasoning does not eliminate hallucination---it often increases the apparent plausibility of unsupported conclusions, producing
  outputs that seem coherent and persuasive but still contain factual errors~\citep{min2023factscorefinegrainedatomicevaluation} or internally inconsistent
  inference~\citep{cheng2025empoweringllmslogicalreasoning}.

  We identify two primary failure modes of hallucination: reasoning-level hallucination and knowledge-level hallucination.
  % We distinguish two such failure modes. 
  In \emph{reasoning-level hallucination}, the model possesses the requisite knowledge, but a particular sampled trajectory contains a flawed inference step that
   propagates to a wrong answer. This failure mode is self-recoverable, since a different sample or a verified self-check could still yield the correct answer. In \emph{knowledge-level hallucination}, by contrast, the
   model lacks the requisite knowledge under the current policy and sampling budget, and any additional reasoning merely rationalizes an unsupported prediction. Thus, additional reasoning or reflection cannot rescue the answer, and
  the only safe action is to abstain. These two modes demand fundamentally different interventions, and conflating them is a central source of mis-calibration in current alignment recipes.

  Existing approaches address at most one of the two failure modes. Outcome-level preference alignment such as RLHF~\citep{ouyang2022traininglanguagemodelsfollow} and
  DPO~\citep{rafailov2024directpreferenceoptimizationlanguage} optimizes the final response but provides no explicit supervision over the reasoning trajectory, so %reasoning-level errors that survive into the answer are undetected. 
  reasoning-level errors that propagate to the final answer remain undetected.
  Process-supervision and self-correction methods, including step-level process reward models~\citep{uesato2022processfeedback, lightman2024letsverify},
  Self-Refine~\citep{madaan2023selfrefine}, and Reflexion~\citep{shinn2023reflexion}, focus on reliability assessment but presume that the errors are self-recoverable---they have no mechanism to distinguish a fixable inference
   slip from genuinely missing knowledge, and recent evidence shows that without external feedback, models often fail to detect their own errors or even overturn correct answers~\citep{huang2023cannotselfcorrect,
  kamoi2024selfcorrection}. Conversely, abstention and uncertainty-calibration methods such as rejection tuning~\citep{xu2024rejectionimprovesreliabilitytraining}, R-Tuning~\citep{zhang2024rtuning}, and IDK-token
  approaches~\citep{cohen2024idktoken, cheng2024aiassistantsknowdont} encourage refusal under uncertainty but ignore reasoning repair, and therefore over-abstain whenever a recoverable reasoning
  failure is mistaken for an irrecoverable knowledge gap. It remains an open question how to establish a unified, reliability-oriented alignment framework that can decide, within a single completion, whether the current reasoning supports a reliable answer or whether the model should abstain.

  We propose \textbf{REIN}, an alignment framework that teaches an LRM, in a single forward pass, to reflect on its own reasoning before committing to an answer and to abstain 
  when the answer is not adequately supported by the model's internal knowledge.
  REIN trains the model to emit a structured completion \texttt{<think>} $\rightarrow$ \texttt{<reflection>} $\rightarrow$ \texttt{<answer>}, where the \texttt{<reflection>} span is a trainable
  reliability judgment whose stance is aligned, via a centered reflection-veracity reward, with the verified correctness of the answer. To handle knowledge-level failures, REIN further derives a group-level
  boundary indicator from the model's own sampled completions and uses a boundary-aware IDK reward that rewards abstention only when no sample in the group yields a (nearly) correct answer---thereby separating recoverable
  reasoning failures from genuine knowledge gaps without process supervision, external retrieval, or test-time controllers.

  Our contributions are threefold:
  \begin{enumerate}

  \item We propose REIN, a unified single‑pass alignment framework that structures generation as \texttt{<think>} $\rightarrow$ \texttt{<reflection>} $\rightarrow$ \texttt{<answer>}, enabling LRMs to decide within one forward pass whether the current reasoning should be trusted for finalization or whether the model should abstain, without external controllers or iterative loops.
  \item We design a centered reflection‑veracity reward to align the reflection span with the correctness of the pre-reflection draft conclusion, along with a boundary‑aware IDK reward to incentivize abstention when the model lacks sufficient knowledge.
  \item Extensive experiments across multiple backbones and reasoning benchmarks demonstrate that REIN reduces the hallucination proxy by 58–72\% while maintaining 86–91\% average coverage, and improves accuracy on attempted questions by 6.6–14.2\%.

  \end{enumerate}

\section{Related Work}
  \label{sec:related_work}

  \paragraph{Self-correction and reflection.}
  Self-correction methods push the LLM to self-critique and revise its own output. Self-Refine is an inference-time framework that iteratively alternates between model-generated feedback and output revision~\citep{madaan2023selfrefine}. SCoRe instead learns multi-turn self-correction from on-policy traces through reinforcement learning~\citep{kumar2025score}. Evidence on intrinsic self-correction nevertheless shows that self-generated feedback does not consistently correct errors and may overturn correct answers when no reliable external signal is available~\citep{huang2023cannotselfcorrect, kamoi2024selfcorrection, li2024hindsight}. REIN assigns reflection a different role: the \texttt{<reflection>} span assesses whether the preceding reasoning trajectory supports a reliable draft conclusion. Its judgment is aligned with the verified correctness of that pre-reflection conclusion and guides finalization within the same structured completion, without requiring a separate critique--revision loop.

\paragraph{Outcome and process alignment.}
  Preference-based RL such as RLHF~\citep{ouyang2022traininglanguagemodelsfollow}, Safe RLHF~\citep{dai2023saferlhfsafereinforcement}, and DPO~\citep{rafailov2024directpreferenceoptimizationlanguage} aligns model behavior to outcome-level preferences but provides no supervision on the reasoning trajectory itself. Rejection-sampling fine-tuning and verifier-driven RL extend this to verifiable-answer domains via automatic answer checking~\citep{yuan2023rftmath, shao2024deepseekmath, deepseekai2025deepseekr1incentivizingreasoningcapability}, while process-reward models supply step-level signals at the cost of dense human or LLM annotations~\citep{uesato2022processfeedback, lightman2024letsverify, huang2025dualbreach}. Unlike outcome-only preference RL, which is silent about the reasoning trajectory and process-supervised reward models, which require fine-grained step labels and still cannot decide when to abstain. REIN keeps the GRPO outer loop but redesigns the reward family around \emph{reliability}: a reflection-veracity reward aligns the model's self-assessment with answer correctness, and a boundary-aware IDK reward injects an abstention signal, all without step-level supervision.

\paragraph{Uncertainty and abstention.}
R-Tuning and IDK-token methods train models to abstain under uncertainty~\citep{zhang2024rtuning,cohen2024idktoken, cheng2024aiassistantsknowdont}. TruthRL instead uses a ternary RL objective that distinguishes correct answers, hallucinations, and abstentions~\citep{wei2026truthrlincentivizingtruthfulllms}. REIN differs by separately aligning a pre-finalization reflection with the correctness of the draft conclusion implied by the reasoning trajectory, and the final answer-or-abstain decision with a sampled knowledge boundary. KnowRL introduces fine-grained factual supervision over atomic claims in the reasoning trace~\citep{ren2026knowrlexploringknowledgeablereinforcement, wu2026datashield}, whereas REIN requires only task-answer verification. BAPO studies boundary-aware rewards for agentic search~\citep{liu2026bapoboundaryawarepolicyoptimization, zhao2025topologybehavioralsemanticsenhancing}, while BARREL promotes concise, boundary-aware reasoning to mitigate overthinking~\citep{yang2026barrelboundaryawarereasoningfactual, xiu2025dynamic}. In contrast, REIN jointly aligns reflection and abstention within a single structured completion.

\section{REIN Framework}
  \label{sec:method}

\subsection{Framework Setup}
\label{sec:method_problem_setup}

Given a prompt $x\in\mathcal{X}$, the policy $\pi_\theta$ generates a structured completion $z$:
\begin{equation}
    z \sim \pi_\theta(\cdot \mid x).
    \label{eq:structured_completion}
\end{equation}
Each completion follows the order
\begin{equation}
    \text{\texttt{<think>}}
    \rightarrow
    \text{\texttt{<reflection>}}
    \rightarrow
    \text{\texttt{<answer>}}.
    \label{eq:output_structure}
\end{equation}
A deterministic parser extracts the three spans from $z$:
\begin{equation}
    (c,r,y_1):=\mathcal{P}(z).
    \label{eq:parse}
\end{equation}

The reasoning trajectory may imply a preliminary conclusion before reflection.
We denote the draft conclusion extracted from $c$ as
\begin{equation}
    y_0:=\mathcal{D}_x(c),
    \label{eq:draft_answer}
\end{equation}
where $\mathcal{D}_x$ is a task-specific deterministic answer extractor.

Here, $c$ is the pre-reflection reasoning trajectory, $y_0$ is the draft conclusion implied by that trajectory, $r$ is the model's reliability assessment conditioned on the preceding trajectory, and $y_1$ is the final answer generated after reflection. The reflection assesses whether the draft answer $y_0$ implied by the preceding reasoning is reliable. The final answer $y_1$ is either a substantive task answer in $\mathcal{Y}$ or the canonical abstention response \texttt{IDK}. Because $r$ is generated after the reasoning trajectory but before $y_1$,it influences the subsequent finalization decision.

For any candidate answer $y$, we define
\begin{equation}
T(x,y):=
\begin{cases}
1, & \text{if } y\neq\mathrm{IDK}
     \text{ and }\mathcal{V}_x(y,y^\star(x))=1,\\
0, & \text{otherwise}.
\end{cases}
\label{eq:correctness_indicator}
\end{equation}

The draft answer $y_0$ and the final answer $y_1$ are verified for different purposes. The reflection-veracity reward evaluates whether the reflection judgment agrees with $T(x,y_0)$, whereas the answer-correctness and boundary-aware abstention rewards evaluate $T(x,y_1)$. Answers mentioned elsewhere in the reflection span are not treated as task answers. A final answer $y_1=\texttt{IDK}$ is not counted as a correct task answer and is handled separately by the abstention reward. Malformed structured outputs are handled by the format rewards described in Section~\ref{sec:method_optimization}.

Following the two failure modes described in the Introduction, REIN addresses reasoning hallucination and knowledge hallucination. For reasoning hallucination, reflection alignment trains the model to judge whether the draft answer $y_0$ formed by the current reasoning is reliable. This reduces false endorsement, where an incorrect draft answer $y_0$ is nevertheless judged reliable. For knowledge hallucination, REIN estimates a sampled knowledge boundary from a group of on-policy completions. If the group contains no verified-correct final answer $y_1$, the model is encouraged to abstain instead of producing an unsupported final answer. Together, the two signals guide the model's final answer-or-abstain decision in $y_1$.

\subsection{Reflection Alignment for Reasoning Hallucination}
\label{sec:method_reflection}

Before generating the final answer, REIN uses the \texttt{<reflection>} span to assess whether the preceding reasoning trajectory supports a reliable draft conclusion $y_0$. After the completion is generated, REIN verifies the draft conclusion $y_0$ and rewards the reflection when its reliability judgment matches $T(x,y_0)$. The post-reflection final answer $y_1$ is not used as the target of the reflection-veracity reward. This distinction prevents a correct assessment of an unreliable draft trajectory from being penalized when the model subsequently revises its final answer.

A fixed parser maps the reflection $r$ to
\[
s(r)\in
\{\texttt{correct},\texttt{wrong},\texttt{uncertain},\bot\},
\]
where $\bot$ denotes a missing, invalid, or ambiguous stance. For a valid stance, we define the binary reflection judgment as
\begin{equation}
J(r):=
\begin{cases}
1, & s(r)=\texttt{correct},\\
0, & s(r)\in\{\texttt{wrong},\texttt{uncertain}\}.
\end{cases}
\label{eq:reflection_indicator}
\end{equation}
Here, $J(r)=1$ indicates that the model considers the conclusion implied by the preceding reasoning trajectory reliable, while $J(r)=0$ indicates that it considers the trajectory unreliable.

The main failure targeted by reflection alignment is false endorsement:
\begin{equation}
\mathrm{FE}(x,r,y_0)
:=
\mathbb{1}[y_0\in\mathcal{Y}]
\bigl(1-T(x,y_0)\bigr)J(r).
\label{eq:false_endorsement}
\end{equation}
False endorsement occurs when the draft conclusion implied by the reasoning trajectory is incorrect but the reflection nevertheless assigns it a reliable stance. Such over-confidence may cause the model to preserve an unreliable trajectory during finalization.

The opposite mismatch is false rejection:
\begin{equation}
\mathrm{FR}(x,r,y_0)
:=
T(x,y_0)\bigl(1-J(r)\bigr).
\label{eq:false_rejection}
\end{equation}
False rejection occurs when a reliable draft conclusion is judged unreliable. Although the final answer $y_1$ may still be correct, this mismatch can lead to unnecessary revision or abstention.

The reflection-veracity reward is defined as
\begin{equation}
r_{\mathrm{vrcty}}(x,r,y_0):=
\begin{cases}
+1, & s(r)\neq\bot \ \wedge\ J(r)=T(x,y_0),\\
-1, & s(r)\neq\bot \ \wedge\ J(r)\neq T(x,y_0),\\
0, & s(r)=\bot.
\end{cases}
\label{eq:vrcty}
\end{equation}
Invalid or missing reflection stances receive no veracity reward and are handled by the format rewards described in Section~\ref{sec:method_optimization}.

Reflection alignment works at the completion level. It aligns a reflection judgment with the verified reliability of the draft conclusion implied by the preceding reasoning trajectory. It does not perform step-level verification or identify which reasoning step causes an error. The correctness of the final answer $y_1$ is supervised separately by the answer-correctness reward, while the final answer-or-abstain decision is further guided by the group-level signal introduced next.

\subsection{Boundary-Aware Abstention Alignment for Knowledge Hallucination}
\label{sec:method_abstention}

Reflection alignment assesses whether a single reasoning trajectory supports a reliable draft conclusion $y_0$, but it cannot determine whether the current policy can produce a verified final answer $y_1$ for the prompt. REIN therefore uses a group of on-policy completions to obtain an empirical estimate of the current policy's knowledge boundary.

For each prompt $x$, the current policy samples $K$ completions:
\begin{equation}
z^{(k)} \sim \pi_\theta(\cdot \mid x),
\qquad k=1,\ldots,K.
\label{eq:group_sampling}
\end{equation}

Applying $\mathcal{P}$ and $\mathcal{D}_x$ to each sampled completion yields
\begin{equation}
\mathcal{G}_K(x)
:=
\left(
(c^{(k)}, y_0^{(k)}, r^{(k)}, y_1^{(k)})
\right)_{k=1}^{K},
\label{eq:sampled_group}
\end{equation}
where
$(c^{(k)}, r^{(k)}, y_1^{(k)})=\mathcal{P}(z^{(k)})$
and
$y_0^{(k)}=\mathcal{D}_x(c^{(k)})$.

We mark the prompt as \texttt{within} the sampled boundary if at least one completion produces a verified-correct final answer $y_1^{(k)}$. 
Otherwise, the prompt is marked as \texttt{beyond}:
\begin{equation}
\widehat{B}_K(x)
:=
\begin{cases}
\texttt{within},
& \exists\, k \in \{1,\ldots K\}:
  T(x,y_1^{(k)})=1\\
\texttt{beyond},
& \text{otherwise}.
\end{cases}
\label{eq:sampled_boundary}
\end{equation}

The estimated boundary depends on the current policy, the group size $K$, and the decoding configuration. It is therefore an empirical, policy-relative boundary rather than an oracle label of whether the relevant knowledge is absent from the model parameters. Larger groups reduce finite-sampling uncertainty but increase rollout and verification costs. We use $K=16$ as a practical trade-off between estimation stability and computational cost. Appendix~G provides the probabilistic interpretation, finite-sample confidence analysis, and sensitivity results for different rollout budgets.

REIN uses this boundary to supervise the final answer-or-abstain decision in $y_1$. When $\widehat{B}_K(x)=\texttt{within}$, the sampled group contains at least one verified-correct final answer $y_1^{(k)}$. REIN therefore rewards verified-correct final answers and penalizes unnecessary abstention in $y_1$. When $\widehat{B}_K(x)=\texttt{beyond}$, REIN rewards $y_1^{(k)}=\texttt{IDK}$ instead of an unverified substantive final answer.

For completion $k$, the boundary-aware abstention reward is
\begin{equation}
r_{\mathrm{idk}}^{(k)}
:=
\begin{cases}
+1,
& \widehat{B}_K(x)=\texttt{within}
  \ \wedge\ T(x,y_1^{(k)})=1,\\
-1,
& \widehat{B}_K(x)=\texttt{within}
  \ \wedge\ y_1^{(k)}=\texttt{IDK},\\
+1,
& \widehat{B}_K(x)=\texttt{beyond}
  \ \wedge\ y_1^{(k)}=\texttt{IDK},\\
0,
& \text{otherwise}.
\end{cases}
\label{eq:idk}
\end{equation}

We detect abstention from the parsed \texttt{<answer>} span $y_1$ using a fixed set of IDK patterns. This reward provides a prompt-level empirical knowledge-boundary signal. A single failed completion does not by itself justify abstention when another rollout for the same prompt has already produced a verified-correct final answer. Incorrect substantive final answers receive no positive reward from $r_{\mathrm{idk}}$ and are evaluated separately by $r_{\mathrm{acc}}$. The reflection-veracity reward separately supervises whether the reflection $r$ correctly assesses the reliability of the corresponding reasoning trajectory and its implied draft conclusion $y_0$.

\begin{table}[!t]
\centering
\small
\renewcommand{\arraystretch}{1.1}
\setlength{\tabcolsep}{0.9pt}

\resizebox{\textwidth}{!}{%
\begin{tabular}{l *{4}{ccc}}
\toprule

\multirow{3}{*}{\textbf{Method}}
& \multicolumn{6}{c}{\textbf{Mathematical Reasoning}}
& \multicolumn{6}{c}{\textbf{Commonsense Reasoning}} \\

\cmidrule(lr){2-7}
\cmidrule(lr){8-13}

& \multicolumn{3}{c}{\textbf{GSM8K}}
& \multicolumn{3}{c}{\textbf{MATH}}
& \multicolumn{3}{c}{\textbf{StrategyQA}}
& \multicolumn{3}{c}{\textbf{ARC-Challenge}} \\

\cmidrule(lr){2-4}
\cmidrule(lr){5-7}
\cmidrule(lr){8-10}
\cmidrule(lr){11-13}

& Sel.Acc.$\uparrow$
& H-Proxy$\downarrow$
& Eff.Acc.$\uparrow$
& Sel.Acc.$\uparrow$
& H-Proxy$\downarrow$
& Eff.Acc.$\uparrow$
& Sel.Acc.$\uparrow$
& H-Proxy$\downarrow$
& Eff.Acc.$\uparrow$
& Sel.Acc.$\uparrow$
& H-Proxy$\downarrow$
& Eff.Acc.$\uparrow$ \\

\midrule

% ============================================================
% Qwen2.5-7B
% ============================================================
\rowcolor{graybg}
\multicolumn{13}{l}{\textbf{Backbone: Qwen2.5-7B}} \\

Base
&79.1&41.0&79.1
&41.2&45.0&41.2
&64.3&38.0&64.3
&74.8&36.0&74.8\\

Instruct
&83.4&33.0&83.4
&49.8&39.0&49.8
&69.5&31.0&69.5
&80.3&30.0&80.3\\

IDK Prompting
&86.8&22.5&78.9$_{(90.9)}$
&54.1&31.1&38.7$_{(71.5)}$
&73.4&22.0&60.2$_{(82.0)}$
&84.2&20.8&76.7$_{(91.1)}$\\

R-Tuning
&84.2&13.5&68.8$_{(81.7)}$
&51.2&27.5&40.4$_{(79.4)}$
&64.3&12.4&57.5$_{(89.4)}$
&86.1&17.2&81.9$_{(95.2)}$\\

TruthRL
&86.9&17.9&80.6$_{(92.7)}$
&57.3&42.7&48.7$_{(85.0)}$
&67.7&26.9&62.7$_{(92.6)}$
&83.7&10.8&77.5$_{(92.6)}$\\

Reflection-only
&85.2&18.9&\textbf{85.2}
&51.2&22.1&\textbf{51.2}
&70.3&17.5&\textbf{70.3}
&82.1&16.2&\textbf{82.1}\\

\textbf{REIN (Ours)}
&\textbf{90.6}&\textbf{11.0}&85.0$_{(93.8)}$
&\textbf{61.0}&\textbf{15.0}&50.9$_{(83.5)}$
&\textbf{76.2}&\textbf{10.0}&69.2$_{(90.8)}$
&\textbf{86.5}&\textbf{9.0}&80.9$_{(93.5)}$\\

\midrule

% ============================================================
% Llama-3.1-8B
% ============================================================
\rowcolor{graybg}
\multicolumn{13}{l}{\textbf{Backbone: Llama-3.1-8B}} \\

Base
&76.8&44.0&76.8
&38.5&48.0&38.5
&66.7&37.0&66.7
&76.2&35.0&76.2\\

Instruct
&81.6&35.0&81.6
&47.3&42.0&47.3
&71.5&30.0&71.5
&82.8&28.0&82.8\\

IDK Prompting
&85.0&25.2&74.2$_{(87.3)}$
&51.8&33.9&36.8$_{(71.0)}$
&74.8&21.1&67.4$_{(90.1)}$
&86.2&20.5&76.2$_{(88.4)}$\\

R-Tuning
&86.5&21.4&77.9$_{(90.0)}$
&52.9&28.9&44.8$_{(84.6)}$
&73.8&23.1&69.0$_{(93.5)}$
&79.6&18.4&77.7$_{(97.7)}$\\

TruthRL
&77.8&22.1&77.3$_{(99.3)}$
&53.2&50.6&47.3$_{(89.0)}$
&73.7&19.2&67.8$_{(92.9)}$
&82.5&17.2&81.3$_{(98.6)}$\\

Reflection-only
&83.5&21.2&\textbf{83.5}
&49.2&26.0&\textbf{49.2}
&72.9&17.7&\textbf{72.9}
&86.2&15.9&\textbf{86.2}\\

\textbf{REIN (Ours)}
&\textbf{89.2}&\textbf{12.0}&83.0$_{(93.0)}$
&\textbf{59.5}&\textbf{18.0}&48.5$_{(81.5)}$
&\textbf{78.2}&\textbf{11.0}&72.3$_{(92.5)}$
&\textbf{88.0}&\textbf{10.0}&83.6$_{(95.0)}$\\

\bottomrule
\end{tabular}%
}

\caption{Comparison of reliability alignment methods. 
Sel.Acc. denotes accuracy among answered examples, and H-Proxy measures the fraction of incorrect answers incorrectly judged as reliable. 
Eff.Acc. denotes overall accuracy over all examples. 
Coverage values below 100\% are reported as subscripts of Eff.Acc., while omitted entries indicate full coverage. 
Higher Sel.Acc. and Eff.Acc. are preferred, whereas lower H-Proxy is better.}

\label{tab:main_results}
\end{table}

\subsection{Joint Reward and GRPO Optimization}
\label{sec:method_optimization}

REIN first applies a short supervised fine-tuning stage to initialize the structured output format and the reflection labels used to assess the reliability of the pre-reflection reasoning trajectory and its implied draft conclusion $y_0$. The resulting policy is then optimized with GRPO over the on-policy completion groups defined in Section~\ref{sec:method_abstention}.

For each completion $k$, the total reward combines final-answer correctness, format control, trajectory-grounded reflection alignment, and final-answer abstention alignment:
\begin{equation}
\begin{aligned}
R^{(k)}(x)
={}&
w_{\mathrm{acc}}r_{\mathrm{acc}}^{(k)}
\\
&+
w_{\mathrm{xml}}r_{\mathrm{xml}}^{(k)}
+
w_{\mathrm{pres}}r_{\mathrm{pres}}^{(k)}
\\
&+
w_{\mathrm{vrcty}}r_{\mathrm{vrcty}}^{(k)}
+
w_{\mathrm{idk}}r_{\mathrm{idk}}^{(k)}.
\end{aligned}
\label{eq:composite}
\end{equation}

For completion $k$, the answer-correctness reward $r_{\mathrm{acc}}^{(k)}$ evaluates the post-reflection final answer $y_1^{(k)}$ and rewards it when it is verified correct. The format rewards $r_{\mathrm{xml}}^{(k)}$ and $r_{\mathrm{pres}}^{(k)}$ enforce the required tag structure and the presence of a non-empty reflection span, respectively. The reflection-veracity reward $r_{\mathrm{vrcty}}^{(k)}$ aligns the reflection judgment $r^{(k)}$ with the verified correctness of the corresponding draft conclusion $y_0^{(k)}$ implied by the preceding reasoning trajectory. The boundary-aware reward $r_{\mathrm{idk}}^{(k)}$ aligns the final answer-or-abstain action expressed in $y_1^{(k)}$ with the group-level sampled knowledge boundary.

Within GRPO, the rewards of the $K$ completions sampled for the same prompt are normalized to obtain relative advantages:
\begin{equation}
\widehat{A}^{(k)}
=
\frac{
R^{(k)}
-
\operatorname{mean}_{j}R^{(j)}
}{
\operatorname{std}_{j}R^{(j)}
+
\epsilon
}.
\label{eq:group_advantage}
\end{equation}

REIN then applies the standard GRPO objective with KL regularization relative to the reference policy. While $r_{\mathrm{vrcty}}^{(k)}$ supervises whether the reflection correctly assesses the reliability of the pre-reflection reasoning trajectory, $r_{\mathrm{acc}}^{(k)}$ and $r_{\mathrm{idk}}^{(k)}$ supervise final-answer correctness and abstention in $y_1^{(k)}$, respectively. All reward components contribute to the same scalar reward and update the same policy. Therefore, trajectory reliability assessment and final answer-or-abstain decisions are optimized jointly.

REIN further uses a two-phase GRPO curriculum to establish abstention behavior. In the first phase, the training prompt explicitly presents $y_1=\texttt{IDK}$ as an available final action. In the second phase, the explicit IDK cue is removed while the boundary-aware reward is retained. The policy must then determine whether $y_1$ should contain a substantive answer or \texttt{IDK}, without a direct IDK instruction.

At inference time, REIN generates a single structured completion without reference answers, task verifiers, group sampling, or multi-round critique--revision loops. Within this single completion, the trained policy forms a draft conclusion $y_0$ from the reasoning trajectory, assesses its reliability through $r$, and produces a final answer $y_1$ that is either substantive or \texttt{IDK}.

\section{Experimental Setup}

\paragraph{Datasets and Evaluation Metrics.}
We evaluate all methods on four benchmarks: GSM8K~\citep{cobbe2021gsm8k}
and MATH-500~\citep{hendrycks2021math} for mathematical reasoning, and
StrategyQA~\citep{geva2021strategyqa} and
ARC-Challenge~\citep{clark2018think} for commonsense reasoning.
GRPO optimization uses only the GSM8K training set; results on the other
three benchmarks therefore measure the cross-task transfer of the learned
reliability behavior.

We report four primary metrics: Selective Accuracy
(\textbf{Sel.Acc.}), Hallucination Proxy
(\textbf{H-Proxy}), Coverage
(\textbf{Cov.}), and Effective Accuracy
(\textbf{Eff.Acc.}).

Sel.Acc.\ measures correctness among substantive final answers. H-Proxy measures the proportion of incorrect pre-reflection draft conclusions \(y_0\) that are nevertheless positively endorsed by the corresponding reflection \(r\). Specifically, it estimates \(\Pr[J(r)=1 \mid T(x,y_0)=0]\), thereby measuring draft-level false endorsement rather than the overall incidence of incorrect final answers. Cov.\ is the proportion of evaluation examples for which the model produces a substantive final answer, while Eff.Acc.\ is the proportion of all examples that are answered correctly. Higher Sel.Acc., Cov., and Eff.Acc.\ are better, whereas lower H-Proxy is better. Detailed definitions of these metrics are provided in Appendix~A.

\paragraph{Models and Baselines.}
We evaluate REIN on two backbone models: Qwen2.5-7B~\citep{qwen25technicalreport} and Llama-3.1-8B~\citep{dubey2024llama3herd}.
These backbones represent different model families and allow us to test whether REIN generalizes beyond a single architecture. We use Base and Instruct as checkpoint baselines, IDK Prompting as a training-free baseline, Reflection-only as a reflection-alignment baseline, and R-Tuning~\citep{cohen2024idktoken} as an explicit uncertainty-modeling baseline. We also use TruthRL~\citep{wei2026truthrlincentivizingtruthfulllms} as a reinforcement-learning baseline for factuality and abstention.
We reproduce all training-based baselines using their original objectives and adapt them to our experimental setting. All methods, including REIN, are evaluated using the same test splits and a unified decoding and evaluation protocol.For H-Proxy evaluation, every method is prompted to generate the same \texttt{<think>} \(\rightarrow\) \texttt{<reflection>} \(\rightarrow\) \texttt{<answer>} structure within a single autoregressive completion. The reflection therefore serves as an endogenous pre-finalization judgment of the model's own draft conclusion \(y_0\); no additional post-hoc judging call is used.

We additionally evaluate REIN on Mistral-7B-v0.3~\citep{mistralai2024mistral7bv03} and DeepSeek-R1-Distill-8B~\citep{deepseekai2025deepseekr1incentivizingreasoningcapability}, and observe consistent improvements. The corresponding results and further
implementation details are provided in Appendix~C.

\paragraph{Training Details.}
Unless otherwise stated, all backbones use the same training configuration. We first perform a short SFT stage to initialize the structured output format and reflection behavior, followed by GRPO training on GSM8K. For each prompt, GRPO samples $K=16$ completions with a maximum length of 2048 and temperature $0.7$. We use LoRA with $r=16$ and $\alpha=32$, and set the KL coefficient to $0.05$. The weights for answer correctness, XML structure, reflection presence, reflection veracity, and IDK alignment are $8.0$, $3.0$, $1.5$, $2.0$, and $4.0$, respectively. Training follows the two-phase IDK curriculum described in Section~\ref{sec:method_optimization}. At evaluation, all models use single-pass greedy decoding without self-consistency or multi-round revision.

\section{Main Results}

\label{sec:main_results}

\paragraph{Overall results.}

Table~\ref{tab:main_results} reports the main comparison on Qwen2.5-7B and Llama-3.1-8B across four benchmarks. Macro-averaged over the eight backbone--benchmark pairs, REIN obtains 78.7\% selective accuracy, 12.0\% H-Proxy, 90.5\% coverage, and 71.7\% effective accuracy. REIN achieves the best selective accuracy and the lowest H-Proxy in every reported cell, indicating that the improvement is not restricted to a particular backbone or task family. Relative to the Base checkpoints, REIN improves selective accuracy by 14.0 \% and effective accuracy by 7.0 \% on average, while reducing H-Proxy by 28.5 \%. The accompanying 9.6-point reduction in coverage is therefore not a simple loss of utility: the model answers fewer low-support cases but still produces more correct answers over the complete evaluation set.

\begin{figure}[!t]
    \centering
    \includegraphics[width=0.75\textwidth]{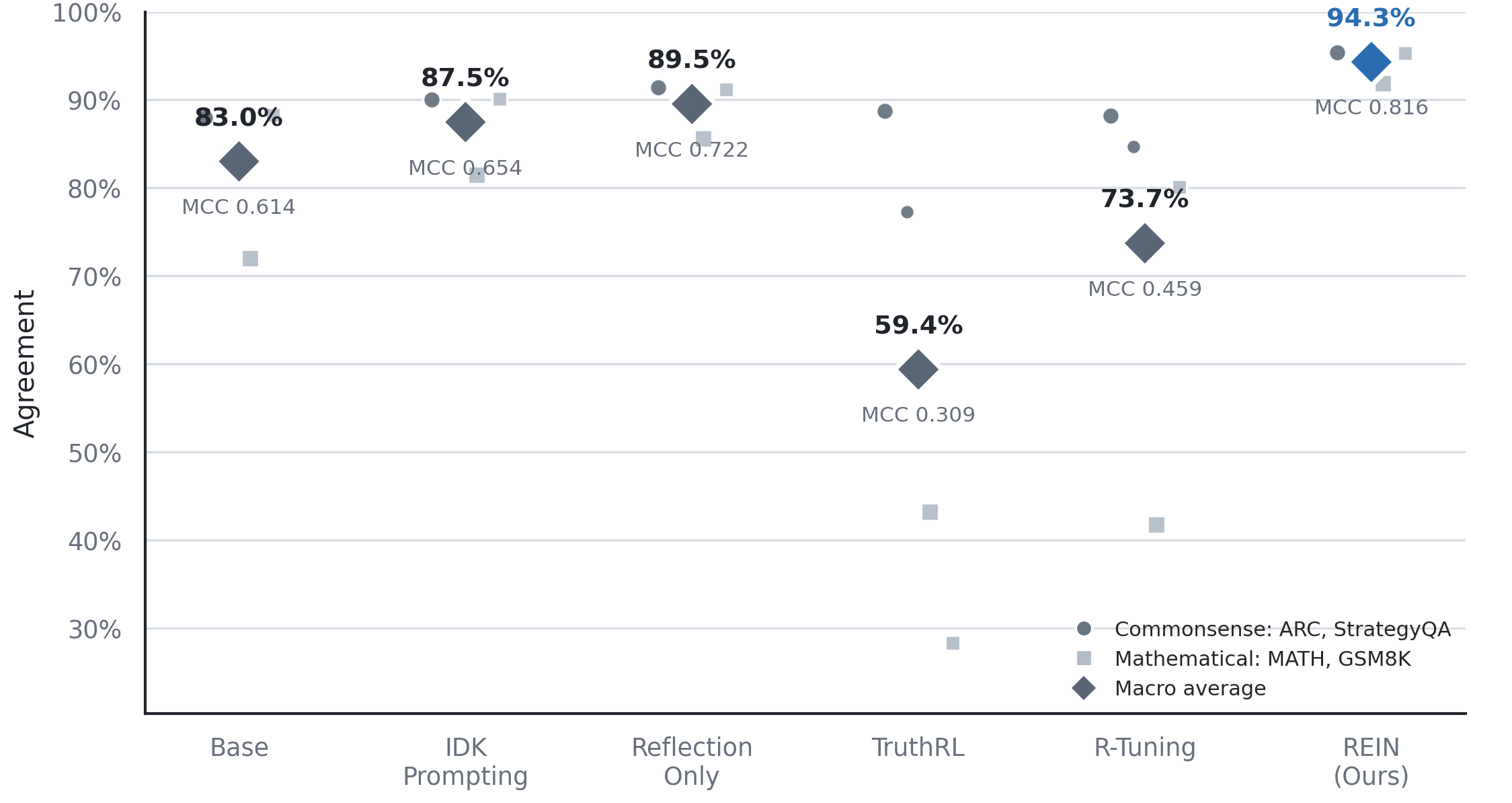}
    \caption{Reliability judgment calibration across methods and backbones. We report reflection–answer agreement and Matthews correlation coefficient (MCC), showing that REIN improves the alignment between self-reflection judgments and actual answer correctness.}
    \label{fig:mcc_fig1}
\end{figure}

\paragraph{Comparison with different methods}
Compared with IDK Prompting, REIN reduces H-Proxy by 12.6 \% while
improving coverage by 6.4 \% and effective accuracy by 8.0 \%.
This comparison is important because both methods permit abstention, but
prompting alone tends to refuse without learning a stable reliability
decision rule. Compared with R-Tuning, REIN improves selective accuracy
by 6.3 \% and effective accuracy by 6.9 \%, with both higher
coverage and lower H-Proxy. Relative to TruthRL, REIN reduces H-Proxy by
13.9 \% and improves effective accuracy by 3.8 \%, at the cost of
a moderate 2.4-point reduction in coverage. These results place REIN in a
more favorable reliability--utility region rather than merely moving
along a refusal-only trade-off.

\paragraph{Behavior across task difficulty.}
The gains are consistent across mathematical and commonsense reasoning,
but their form varies with task difficulty. On MATH, REIN improves
selective accuracy over Base by 19.8 \% for Qwen2.5-7B and 21.0
\% for Llama-3.1-8B, while reducing coverage by 16.5 and 18.5 \%,
respectively. In contrast, the coverage reductions on GSM8K and
ARC-Challenge are substantially smaller. This pattern is consistent with
a difficulty-sensitive policy: on a benchmark with more prompts near the
current policy's capability frontier, REIN abstains more aggressively,
whereas on easier tasks it preserves most of the original answer rate.
Importantly, effective accuracy still increases on MATH for both
backbones, showing that the additional abstentions remove errors faster
than they remove correct answers.

\subsection{Ablation Study}

\begin{table}[!htbp]
\centering
\small
\renewcommand{\arraystretch}{1.08}
\setlength{\tabcolsep}{3pt}

\resizebox{0.5\textwidth}{!}{%
\begin{tabular}{@{}lcccc@{}}
\toprule
\textbf{Configuration}
& \textbf{Sel.Acc.}$\uparrow$
& \textbf{H-Proxy}$\downarrow$
& \textbf{Cov.}$\uparrow$
& \textbf{Eff.Acc.}$\uparrow$ \\
\midrule

Core GRPO
& 84.8 & 28.0 & 99.8 & 84.6 \\

\quad + Reflection
& 85.2 & 19.0 & \textbf{100.0} & \textbf{85.2} \\

\quad + IDK
& 90.2 & 20.0 & 93.5 & 84.3 \\

\quad + Reflection + IDK
& \textbf{90.6} & \textbf{11.0} & 93.8 & 85.0 \\

\bottomrule
\end{tabular}%
}

\caption{
Controlled component ablation on GSM8K with Qwen2.5-7B.
All variants share the same SFT initialization, GRPO configuration,
accuracy reward, and format rewards.
}
\label{tab:core_component_ablation}
\end{table}

Additional analyses on group size, reward components, and IDK behavior are reported in Appendix E.

\paragraph{Effect of Reflection Alignment.}

To investigate how the amount of reflection affects reliability,
we vary the reflection depth $d \in \{1, 2, 3\}$ during training, where $d=1$ uses a single \texttt{<reflection>} block and $d>1$ uses nested reflection-correction cycles.
Table~\ref{tab:depth} reports results on GSM8K.

\begin{table}[!htbp]
\centering
\small
\resizebox{0.5\textwidth}{!}{%
\begin{tabular}{@{}ccccc@{}}

\toprule
Depth $d$ & Sel.Acc$\uparrow$ & H-proxy$\downarrow$ & Cov.$\uparrow$ & Eff.Acc $\uparrow$\\
\midrule
1 & \textbf{90.6} & 0.11 & \textbf{93.8} & \textbf{85.0} \\
2 & 89.1 & 0.08 & 90.3 & 80.4 \\
3 & 88.1 & \textbf{0.06} & 88.4 & 77.9 \\
\bottomrule
\end{tabular}%
}
\caption{Effect of reflection depth $d$ on GSM8K (7B). Deeper reflection further suppresses hallucination but reduces accuracy and coverage due to overcorrection.}
\label{tab:depth}
\end{table}

A single reflection layer ($d=1$) yields the best trade-off between hallucination reduction and reasoning fidelity. Deeper reflection ($d=2, 3$) further suppresses hallucinations (0.11 $\rightarrow$ 0.08 $\rightarrow$ 0.06) but reduces coverage (93.8\% $\rightarrow$ 90.3\% $\rightarrow$ 88.4\%) and effective accuracy (85.0 $\rightarrow$ 80.4 $\rightarrow$ 77.9), suggesting an \emph{overcorrection} phenomenon where the model becomes overly cautious in self-judgment. This result supports our design choice that reflection alignment should remain lightweight.

We compare Base with Reflection-only to examine the overall effect of reflection-structured training.
Reflection-only uses the same structured output format and reflection rewards as REIN, but removes the boundary-aware IDK reward.
Averaged over the four benchmarks, Reflection-only improves selective accuracy over Base for all two backbones and substantially reduces the hallucination proxy.
However, Reflection-only keeps 100\% coverage, showing that reflection-structured training improves reliability assessment but does not teach the model when to abstain.

\paragraph{Effect of Boundary-Aware Abstention.}

Table~\ref{tab:idk_quality} reports IDK precision and recall on all datasets, where ``truly beyond boundary'' is determined by oracle evaluation (the gold answer appears in no sampled completion across 64 independent samples).
\begin{table}[H]
\centering
\small

\resizebox{0.5\textwidth}{!}{%
\begin{tabular}{@{}lccc@{}}
\toprule
Dataset & IDK Prec.$\uparrow$ & IDK Recall$\uparrow$ & IDK Rate \\
\midrule
GSM8K      & 82.6 & 71.3 & 6.2\% \\
MATH       & 78.4 & 65.7 & 17.3\% \\
StrategyQA & 80.1 & 68.5 & 8.5\% \\
ARC-C      & 83.2 & 72.8 & 6.8\% \\
\bottomrule
\end{tabular}%
}

\caption{
IDK quality across benchmarks (7B).
IDK Precision $= \Pr[\text{truly beyond} \mid \text{says IDK}]$;
IDK Recall $= \Pr[\text{says IDK} \mid \text{truly beyond}]$;
IDK Rate $=$ fraction of questions where the model abstains.
}

\label{tab:idk_quality}

\end{table}
We compare Reflection-only with REIN to isolate the effect of boundary-aware abstention.
Both methods use the same reflection format, but REIN additionally uses the group-level boundary indicator and IDK reward.
Across the evaluated backbones, REIN further improves selective accuracy and reduces the hallucination proxy compared with Reflection-only.
This comes with a controlled reduction in coverage, because REIN learns to abstain on low-support prompts.
Effective accuracy remains close to Reflection-only, indicating that the reliability gain is not obtained by excessive refusal.

\subsection{Mechanism Analysis}

\paragraph{Two-level reliability-aware finalization.}
The ablation results suggest that REIN makes reliability judgments at two different levels. At the completion level, reflection assesses the solution implied by the preceding reasoning. At the prompt level, the boundary objective estimates whether the current policy can produce a verified solution under a fixed sampling budget. The distinction matters because a failed rollout does not necessarily mean that the prompt is beyond the model's capability: another rollout may still succeed. In such cases, correction is preferable to refusal. By contrast, when none of the sampled rollouts succeeds, repeated refinement may only make an unsupported answer more convincing. The two objectives therefore discourage both premature refusal and unjustified continued answering.

The current evidence supports completion-level reliability alignment and answer-level correction, but not step-level error localization. REIN receives no supervision indicating which token or reasoning step caused the failure. An incorrect draft followed by a correct final answer therefore shows that reflection can support a better final decision, but does not establish that the model identified the exact source of the error. Step-level diagnostics are reported separately in Appendix~A.

\paragraph{Selective boundary behavior.}
The difficulty-stratified results in Appendix~E show that abstention becomes more frequent as the tasks become harder. The IDK rate increases from 2.1\% on two-step GSM8K problems to 14.8\% on problems requiring at least five steps, and from 3.2\% on MATH Level~1 to 31.8\% on Level~5. ARC-Challenge shows a similar pattern across grade levels. Although benchmark difficulty is only an approximate indicator of policy-relative answerability, this monotonic trend argues against a uniform refusal strategy. The IDK precision of 78.4--83.2\% and recall of 65.7--72.8\% in Table~\ref{tab:idk_quality} further indicate that refusals are concentrated on low-support prompts rather than applied indiscriminately.

\section{Conclusion}

We presented REIN, a framework that improves the reliability of large reasoning models by enabling them to assess whether a generated solution should be trusted or rejected before finalizing an answer. REIN addresses two complementary failure modes of hallucination. Reflection alignment helps the model identify unreliable reasoning outcomes, while boundary-aware abstention encourages the model to avoid unsupported answers when the current capability is insufficient. Together, these signals provide a unified reliability-oriented training approach without requiring step-level supervision or inference-time intervention. Experiments across multiple backbones and reasoning benchmarks show that REIN reduces confident but incorrect responses while maintaining strong accuracy and coverage. These results highlight that reliable reasoning requires not only generating better solutions, but also knowing when a solution should not be trusted. Future work will investigate more fine-grained reliability assessment and broader uncertainty modeling.

\bibliographystyle{unsrtnat}
\bibliography{references}

\appendix

\section{Metric Definitions}
\label{sec:metric_appendix}

\subsection{Outcome Parsing and Notation}

For an evaluation set containing $N\geq 1$ instances, each method first generates one response using its native inference procedure. A deterministic output adapter extracts the final-answer span and maps it to exactly one of three mutually exclusive outcomes: a substantive answer, an explicit
\texttt{IDK}, or an invalid output. We define
\begin{equation}
\begin{aligned}
A_i &= \mathbb{1}[y_i\text{ is a substantive answer}],\\
D_i &= \mathbb{1}[y_i\text{ is an explicit \texttt{IDK}}],\\
U_i &= \mathbb{1}[y_i\text{ is invalid}],
\end{aligned}
\qquad
A_i+D_i+U_i=1.
\end{equation}

The abstention parser examines only the extracted final-answer span; uncertainty expressed in the reasoning or reflection does not constitute an abstention. An empty or missing answer, malformed required delimiters,
multiple conflicting answers, a mixture of an abstention and a substantive answer, or an answer that cannot be parsed for the target task is considered invalid. Method-specific output adapters may be used to accommodate different native output formats, but they are fixed before evaluation and cannot access reference answers or verifier feedback.

Let
\begin{equation}
T_i
=
\mathbb{1}
\left[
A_i=1
\ \wedge\
y_i\text{ passes the task-specific verifier}
\right].
\end{equation}
Thus, $T_i\leq A_i$, and $T_i=0$ whenever $A_i=0$. GSM8K uses normalized numerical equivalence, MATH-500 uses mathematical-equivalence checking, StrategyQA uses normalized exact matching over \texttt{yes}/\texttt{no}, and ARC-Challenge uses normalized exact matching of the selected option label.

We define
\begin{equation}
N_a=\sum_{i=1}^{N}A_i,
\qquad
N_c=\sum_{i=1}^{N}T_i,
\qquad
N_w=\sum_{i=1}^{N}(A_i-T_i),
\end{equation}
where $N_a$, $N_c$, and $N_w$ denote the numbers of substantive, correct
substantive, and incorrect substantive answers, respectively. Consequently,
$N_a=N_c+N_w$.

\subsection{Native Reflection Indicator}
For H-Proxy evaluation, every method generates a  structured\texttt{<think>} \(\rightarrow\) \texttt{<reflection>} \(\rightarrow\) \texttt{<answer>} completion in a single autoregressive pass. We use \emph{native} to mean that the reflection is emitted before the final answer within this same completion, regardless of whether the method was explicitly trained with a reflection objective. No additional post-hoc judging call is used to compute the primary H-Proxy. Each native completion is parsed into a pre-reflection draft $y_i^{(0)}$, a reflection $r_i$, and a final answer $y_i^{(1)}$. The draft $y_i^{(0)}$ is extracted only from the content preceding the reflection.

Let
\begin{equation}
P_i^{(0)}
=
\mathbb{1}
\left[
y_i^{(0)}
\text{ is a parseable substantive draft}
\right]
\end{equation}
denote whether a valid draft answer can be extracted. Draft extraction
failures are excluded from draft-correctness metrics and are reported
separately through the draft extraction rate.

Draft correctness is defined as
\begin{equation}
T_i^{(0)}
=
\mathbb{1}
\left[
P_i^{(0)}=1
\ \wedge\
y_i^{(0)}
\text{ passes the task-specific verifier}
\right].
\end{equation}

The native reflection stance is parsed as
\begin{equation}
s_i
\in
\{
\texttt{correct},
\texttt{wrong},
\texttt{uncertain},
\bot
\},
\end{equation}
where $\bot$ denotes an unparsable reflection. We define the binary
native reliability indicator as
\begin{equation}
J_i^{\mathrm{nat}}
=
\mathbb{1}
\left[
s_i=\texttt{correct}
\right].
\end{equation}
Thus, \texttt{wrong}, \texttt{uncertain}, and unparsable reflections
are not treated as endorsements. Uncertain and unparsable cases remain
separately visible in the native-reflection diagnostics.

Let
\begin{equation}
N_w^{(0)}
=
\sum_{i=1}^{N}
P_i^{(0)}(1-T_i^{(0)})
\end{equation}
denote the number of incorrect parseable drafts, and let
\begin{equation}
N_e^{\mathrm{nat}}
=
\sum_{i=1}^{N}
P_i^{(0)}(1-T_i^{(0)})J_i^{\mathrm{nat}}
\end{equation}
denote the number of incorrect drafts that are endorsed as
\texttt{correct} by their native reflections.

\subsection{Primary Evaluation Metrics}
\label{sec:primary_metric_definitions}

Since REIN allows the model to abstain, standard accuracy alone is not sufficient for evaluation.
A model can obtain higher accuracy on attempted questions simply by refusing many difficult questions.
Therefore, we evaluate reliability under a selective answering setting using four metrics:
Selective Accuracy, Hallucination Proxy (H-Proxy), Coverage, and Effective Accuracy.

\begin{equation*}
\begin{aligned}
\mathrm{Sel.Acc.}
&=
\frac{N_c}{N_a},
\\
\mathrm{H\text{-}Proxy}
&=
\frac{N_e^{\mathrm{nat}}}{N_w^{(0)}},
\\
\mathrm{Cov.}
&=
\frac{N_a}{N},
\\
\mathrm{Eff.Acc.}
&=
\mathrm{Sel.Acc.}\times\mathrm{Cov.}
=
\frac{N_c}{N}.
\end{aligned}
\end{equation*}

\textbf{Selective Accuracy} measures the correctness of answers that the model actually attempts.
It reflects whether the model gives more accurate answers when it chooses to answer.

\textbf{Hallucination Proxy} measures the fraction of incorrect pre-reflection drafts that are nevertheless endorsed as
\texttt{correct} by the native reflection generated within the same completion. It captures the failure mode in which the model incorrectly endorses an unreliable draft before producing its final answer.

\textbf{Coverage} measures the fraction of questions for which the model provides a substantive answer instead of abstaining or producing an invalid output.

\textbf{Effective Accuracy} measures the overall fraction of questions answered correctly after accounting for abstention.
It combines answer correctness and coverage, and therefore reflects the actual utility of the model under selective answering.

Higher Selective Accuracy, Coverage, and Effective Accuracy are better, while lower Hallucination Proxy is better.
Together, these metrics show whether REIN improves reliability without relying on excessive abstention.

\subsection{Edge Cases and Reporting}

If $N_a=0$, Coverage and Effective Accuracy are defined as zero,
whereas Selective Accuracy is undefined because its denominator is
zero. Because H-Proxy is computed from pre-reflection drafts, its
definition does not depend on $N_a$.

If $N_w^{(0)}=0$, H-Proxy is undefined and is reported as ``--''.
Other conditional reliability metrics are likewise reported as
``--'' whenever their corresponding denominators are zero.

All metrics are computed directly from sample-level events and are
reported as percentages. Algebraic identities are used only for
interpretation; final values are not reconstructed by multiplying
rounded table entries. Metrics are computed separately for each
checkpoint--benchmark pair, and every cross-benchmark aggregate
explicitly states whether macro- or micro-averaging is used.

\section{Supplementary Experimental Results}

\begin{table}[!t]
\centering
\small
\renewcommand{\arraystretch}{1.1}
\setlength{\tabcolsep}{0.8pt}

\resizebox{\textwidth}{!}{%
\begin{tabular}{l *{4}{ccc}}
\toprule

\multirow{3}{*}{\textbf{Method}}
& \multicolumn{6}{c}{\textbf{Mathematical Reasoning}}
& \multicolumn{6}{c}{\textbf{Commonsense Reasoning}} \\

\cmidrule(lr){2-7}
\cmidrule(lr){8-13}

& \multicolumn{3}{c}{\textbf{GSM8K}}
& \multicolumn{3}{c}{\textbf{MATH}}
& \multicolumn{3}{c}{\textbf{StrategyQA}}
& \multicolumn{3}{c}{\textbf{ARC-Challenge}} \\

\cmidrule(lr){2-4}
\cmidrule(lr){5-7}
\cmidrule(lr){8-10}
\cmidrule(lr){11-13}

& \textbf{Sel.Acc.}$\uparrow$
& \textbf{H-Proxy}$\downarrow$
& \textbf{Eff.Acc.}$\uparrow$

& \textbf{Sel.Acc.}$\uparrow$
& \textbf{H-Proxy}$\downarrow$
& \textbf{Eff.Acc.}$\uparrow$

& \textbf{Sel.Acc.}$\uparrow$
& \textbf{H-Proxy}$\downarrow$
& \textbf{Eff.Acc.}$\uparrow$

& \textbf{Sel.Acc.}$\uparrow$
& \textbf{H-Proxy}$\downarrow$
& \textbf{Eff.Acc.}$\uparrow$ \\

\midrule

% ============================================================
% Mistral-7B-v0.3
% ============================================================

\rowcolor{graybg}
\multicolumn{13}{l}{\textbf{Backbone: Mistral-7B-v0.3}} \\

Base
& 51.4 & 51.0 & 51.4
& 12.8 & 55.0 & 12.8
& 58.5 & 43.1 & 58.5
& 65.4 & 42.2 & 65.4 \\

IDK Prompting
& 54.8 & 34.0 & 45.2$_{(82.5)}$
& 15.8 & 43.1 & 9.2$_{(58.4)}$
& 62.4 & 32.3 & 49.3$_{(79.0)}$
& 69.5 & 30.1 & 59.0$_{(84.9)}$ \\

Reflection-only
& 54.5 & 32.1 & \textbf{54.5}
& 15.2 & 36.1 & \textbf{15.2}
& 62.0 & 28.7 & \textbf{62.0}
& 69.0 & 27.1 & \textbf{69.0} \\

\textbf{REIN (Ours)}
& \textbf{58.5} & \textbf{17.9} & 52.7$_{(90.1)}$
& \textbf{18.5} & \textbf{25.1} & 13.4$_{(72.4)}$
& \textbf{65.7} & \textbf{19.7} & 59.4$_{(90.4)}$
& \textbf{72.2} & \textbf{18.1} & 66.0$_{(91.4)}$ \\

\midrule

% ============================================================
% DeepSeek-R1-Distill-8B
% ============================================================

\rowcolor{graybg}
\multicolumn{13}{l}{\textbf{Backbone: DeepSeek-R1-Distill-8B}} \\

Base
& 76.8 & 44.0 & 76.8
& 38.5 & 48.0 & 38.5
& 66.7 & 37.0 & 66.7
& 76.2 & 35.0 & 76.2 \\

IDK Prompting
& 89.0 & 19.0 & 81.4$_{(91.5)}$
& 76.5 & 17.0 & 62.7$_{(82.0)}$
& 73.8 & 23.0 & 61.6$_{(83.5)}$
& 83.2 & 24.0 & 71.1$_{(85.5)}$ \\

Reflection-only
& 87.8 & 15.0 & \textbf{87.8}
& 74.3 & 13.0 & \textbf{74.3}
& 70.8 & 20.0 & \textbf{70.8}
& 81.2 & 18.0 & 81.2 \\

\textbf{REIN (Ours)}
& \textbf{91.8} & \textbf{8.0} & 87.7$_{(95.5)}$
& \textbf{80.5} & \textbf{9.0} & 74.1$_{(92.0)}$
& \textbf{76.8} & \textbf{14.0} & 70.7$_{(92.0)}$
& \textbf{86.0} & \textbf{10.0} & \textbf{81.3}$_{(94.5)}$ \\

\bottomrule
\end{tabular}%
}

\label{tab:additional_results}

\caption{
Supplementary experimental results on Mistral-7B-v0.3 and DeepSeek-R1-Distill-8B.
}
\end{table}

\subsection{Comparison with Existing Methods}

We compare REIN with existing methods for mitigating reasoning hallucinations in figure \ref{fig:existing_method_comparison}, including In-Context Learning (ICL), knowledge distillation, and BARREL. To ensure a fair comparison, we adopt three same key evaluation metrics with BARREL: Accuracy (Acc.), Truthfulness (Truth.), and Reliability (Rel.), which are defined as follows: 
$\text{Acc.} = \frac{N_c}{N}$, 
$\text{Truth.} = \frac{N_c + N_r}{N}$, and 
$\text{Rel.} = \text{ans.} \cdot \text{Truth.} + (1 - \text{ans.}) \cdot \text{Acc.}$, 
where $\text{ans.} = 1 - \frac{N_r}{N}$. 
As illustrated in Figure~\ref{fig:existing_method_comparison}, REIN achieves the highest Acc., Truth., and Rel. among all compared baselines. Furthermore, we also compare our method against token-level calibration approaches such as R-Tuning and IDK-token training. We observe that these methods significantly interfere with the model's original output distribution, thereby degrading its inherent reasoning capabilities.

\begin{figure}[!t]
    \centering
    \includegraphics[width=0.75\textwidth]{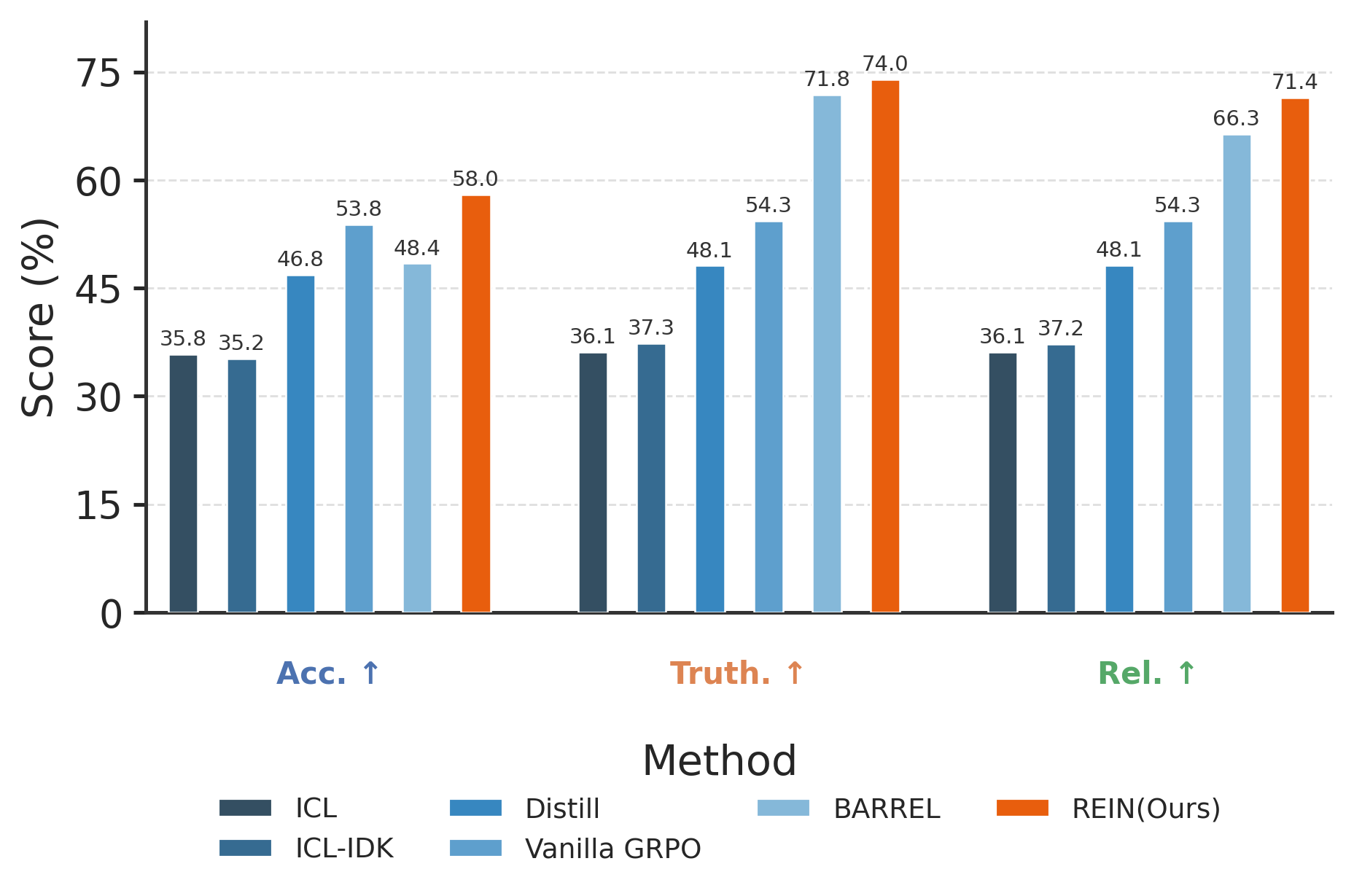}
    \caption{Comparison of REIN with existing methods. REIN achieves the highest Accuracy, Truthfulness, and Reliability. The model is Deepseek-Distill-8B and the benchmark is TriviaQA.}
    \label{fig:existing_method_comparison}
\end{figure}

\subsection{Extended Baseline Comparison under a Unified Protocol}
\label{sec:extended_baseline_comparison}

\begin{table}[!t]
\centering
\small
\renewcommand{\arraystretch}{1.05}
\setlength{\tabcolsep}{0.5pt}

\resizebox{\textwidth}{!}{%
\begin{tabular}{l *{4}{ccc}}
\toprule

\multirow{3}{*}{\textbf{Method}}
& \multicolumn{6}{c}{\textbf{Mathematical Reasoning}}
& \multicolumn{6}{c}{\textbf{Commonsense Reasoning}} \\

\cmidrule(lr){2-7}
\cmidrule(lr){8-13}

& \multicolumn{3}{c}{\textbf{GSM8K}}
& \multicolumn{3}{c}{\textbf{MATH}}
& \multicolumn{3}{c}{\textbf{StrategyQA}}
& \multicolumn{3}{c}{\textbf{ARC-Challenge}} \\

\cmidrule(lr){2-4}
\cmidrule(lr){5-7}
\cmidrule(lr){8-10}
\cmidrule(lr){11-13}

& \textbf{Sel.Acc.}$\uparrow$
& \textbf{H-Proxy}$\downarrow$
& \textbf{Eff.Acc.}$\uparrow$

& \textbf{Sel.Acc.}$\uparrow$
& \textbf{H-Proxy}$\downarrow$
& \textbf{Eff.Acc.}$\uparrow$

& \textbf{Sel.Acc.}$\uparrow$
& \textbf{H-Proxy}$\downarrow$
& \textbf{Eff.Acc.}$\uparrow$

& \textbf{Sel.Acc.}$\uparrow$
& \textbf{H-Proxy}$\downarrow$
& \textbf{Eff.Acc.}$\uparrow$
\\

\midrule

% ============================================================
% Qwen2.5-7B
% ============================================================

\rowcolor{graybg}
\multicolumn{13}{l}{\textbf{Backbone: Qwen2.5-7B}} \\

\textit{Training-free} \\

Base
& 79.1 & 41.0 & 79.1
& 41.2 & 45.0 & 41.2
& 64.3 & 38.0 & 64.3
& 74.8 & 36.0 & 74.8 \\

Self-Refine
& 78.4 & 39.0 & 78.4
& 40.6 & 44.0 & 40.6
& 65.1 & 36.0 & 65.1
& 74.9 & 35.0 & 74.9 \\

Self-Consist@16
& 87.3 & 43.0 & 87.3
& 50.4 & 47.0 & 50.4
& 69.8 & 40.0 & 69.8
& 79.1 & 38.0 & 79.1 \\

IDK Prompting
& 86.8 & 22.5 & 78.9$_{(90.9)}$
& 54.1 & 31.1 & 38.7$_{(71.5)}$
& 73.4 & 22.0 & 60.2$_{(82.0)}$
& 84.2 & 20.8 & 76.7$_{(91.1)}$
\\

\textit{Fine-tuning} \\

RFT
& 84.7 & 34.0 & 84.7
& 47.9 & 39.0 & 47.9
& 68.9 & 32.0 & 68.9
& 78.6 & 30.0 & 78.6 \\

DPO
& 82.9 & 36.0 & 82.9
& 46.1 & 41.0 & 46.1
& 67.5 & 33.0 & 67.5
& 77.4 & 31.0 & 77.4 \\

R-Tuning
& 84.2 & 13.5 & 68.8$_{(81.7)}$
& 51.2 & 27.5 & 40.7$_{(79.4)}$
& 64.3 & 12.4 & 57.5$_{(89.4)}$
& 86.1 & 17.2 & 82.0$_{(95.2)}$
\\

\textit{RL-based} \\

Vanilla GRPO
& 86.4 & 35.0 & 86.4
& 49.7 & 40.0 & 49.7
& 68.4 & 33.0 & 68.4
& 78.2 & 31.0 & 78.2 \\

$S^2$R
& 88.5 & 31.0 & 88.5
& 53.8 & 37.0 & \textbf{53.8}
& 74.2 & 28.0 & \textbf{74.2}
& 79.8 & 27.0 & 79.8 \\

BARREL
& 87.4 & 21.0 & 79.7$_{(91.2)}$
& 56.2 & 27.0 & 44.7$_{(79.6)}$
& 73.5 & 20.0 & 64.8$_{(88.1)}$
& 84.1 & 19.0 & 76.0$_{(90.4)}$
\\

TruthRL
& 86.9 & 17.9 & 80.6$_{(92.7)}$
& 57.3 & 42.7 & 48.7$_{(85.0)}$
& 67.7 & 26.9 & 62.7$_{(92.6)}$
& 83.7 & 10.8 & 77.5$_{(92.6)}$
\\

\textit{Ours} \\

Reflection-only
& 85.2 & 18.9 & \textbf{85.2}
& 51.2 & 22.1 & 51.2
& 70.3 & 17.5 & \textbf{70.3}
& 82.1 & 16.2 & \textbf{82.1}
\\

\textbf{REIN}
& \textbf{90.6} & \textbf{11.0} & 85.0$_{(93.8)}$
& \textbf{61.0} & \textbf{15.0} & 50.9$_{(83.5)}$
& \textbf{76.2} & \textbf{10.0} & 69.2$_{(90.8)}$
& \textbf{86.5} & \textbf{9.0} & 80.9$_{(93.5)}$
\\

% ============================================================
% LLaMA-3.1-8B
% ============================================================

\midrule

\rowcolor{graybg}
\multicolumn{13}{l}{\textbf{Backbone: LLaMA-3.1-8B}} \\

\textit{Training-free} \\

Base
& 76.8 & 44.0 & 76.8
& 38.5 & 48.0 & 38.5
& 66.7 & 37.0 & 66.7
& 76.2 & 35.0 & 76.2 \\

Self-Refine
& 76.1 & 42.0 & 76.1
& 37.8 & 47.0 & 37.8
& 67.4 & 35.0 & 67.4
& 76.8 & 34.0 & 76.8 \\

Self-Consist@16
& 85.6 & 46.0 & 85.6
& 47.9 & 50.0 & 47.9
& 71.5 & 39.0 & 71.5
& 80.3 & 37.0 & 80.3 \\

IDK Prompting
& 85.0 & 25.2 & 74.2$_{(87.3)}$
& 51.8 & 33.9 & 36.8$_{(71.0)}$
& 74.8 & 21.1 & 67.4$_{(90.1)}$
& 86.2 & 20.5 & 76.2$_{(88.4)}$ \\

\textit{Fine-tuning} \\

RFT
& 83.2 & 36.0 & 83.2
& 45.1 & 42.0 & 45.1
& 70.6 & 31.0 & 70.6
& 79.7 & 29.0 & 79.7 \\

DPO
& 81.4 & 38.0 & 81.4
& 43.6 & 44.0 & 43.6
& 69.2 & 33.0 & 69.2
& 78.5 & 31.0 & 78.5 \\

R-Tuning
& 86.5 & 21.4 & 77.9$_{(90.0)}$
& 52.9 & 28.9 & 44.8$_{(84.6)}$
& 73.8 & 23.1 & 69.0$_{(93.5)}$
& 79.6 & 18.4 & 77.8$_{(97.7)}$ \\

\textit{RL-based} \\

Vanilla GRPO
& 84.9 & 37.0 & 84.9
& 47.2 & 43.0 & 47.2
& 70.1 & 32.0 & 70.1
& 79.4 & 30.0 & 79.4 \\

$S^2$R
& 86.8 & 33.0 & 86.8
& 53.4 & 39.0 & \textbf{53.4}
& 75.6 & 30.0 & \textbf{75.6}
& 81.0 & 28.0 & 81.0 \\

BARREL
& 86.1 & 22.0 & 77.9$_{(90.5)}$
& 54.7 & 28.0 & 42.8$_{(78.2)}$
& 75.1 & 21.0 & 65.6$_{(87.4)}$
& 85.8 & 20.0 & 78.2$_{(91.1)}$ \\

TruthRL
& 77.8 & 22.1 & 77.3$_{(99.3)}$
& 53.2 & 50.6 & 47.3$_{(89.0)}$
& 73.7 & 19.2 & 68.5$_{(92.9)}$
& 82.5 & 17.2 & 81.3$_{(98.6)}$ \\

\textit{Ours} \\

Reflection-only
& 83.5 & 21.2 & 83.5
& 49.2 & 26.0 & 49.2
& 72.9 & 17.7 & 72.9
& 86.2 & 15.9 & \textbf{86.2} \\

\textbf{REIN}
& \textbf{89.2} & \textbf{12.0} & 83.0$_{(93.0)}$
& \textbf{59.5} & \textbf{18.0} & 48.5$_{(81.5)}$
& \textbf{78.2} & \textbf{11.0} & 72.3$_{(92.5)}$
& \textbf{88.0} & \textbf{10.0} & 83.6$_{(95.0)}$ \\

\bottomrule
\end{tabular}%
}

\caption{Comprehensive experimental results on Qwen2.5-7B and LLaMA-3.1-8B.}
\label{tab:extend_results}
\end{table}

Table \ref{tab:extend_results} places REIN alongside nine published methods spanning
training-free self-correction, refusal-aware fine-tuning, and
RL-based reliability alignment, together with the Base checkpoint
and the IDK-prompting control of Table~1 of the main paper.
The results separate two quantities that the literature usually
reports together. Reasoning-oriented optimization reliably buys
accuracy: vanilla GRPO lifts average effective accuracy from
64.9 to 70.7 on Qwen2.5-7B, and $S^2R$, which explicitly
rewards self-verification and self-correction, reaches the highest
effective accuracy of any method in the table at 74.1. Neither
buys reliability: their H-Proxy stays at 34.8 and 30.8, barely
below the 40.0 of the untrained backbone, so roughly one wrong
answer in three is still delivered with an explicit assertion that
it is sound. Test-time sampling is not an alternative route to it
either---sixteen-sample self-consistency adds 6.8 points of
effective accuracy and simultaneously raises H-Proxy to 42.0,
because majority voting removes exactly the errors the model was
least sure of and leaves behind the self-consistent ones that
reflection is most willing to endorse. Abstention supervision
moves the second quantity but at a cost on the first: R-Tuning
reaches the lowest H-Proxy of any external method (17.7) yet
abstains hard enough that its effective accuracy of 62.3 falls
below the untrained baseline, and BARREL, whose IDK recipe was
calibrated on long-CoT models that guess at the last moment,
transfers only partially to short-CoT instruction-tuned backbones
and settles at a conservative operating point of 87.3 coverage
and 66.3 effective accuracy.

The most informative comparison is with TruthRL, the strongest
RL-based abstention baseline, and here the two quantities come
apart cleanly. On Qwen2.5-7B the two methods sit at almost the
same coverage (90.7 against our 90.4), and at that shared
operating point REIN adds 4.7 points of selective accuracy and
4.1 points of effective accuracy while cutting H-Proxy from
24.6 to 11.3. On LLaMA-3.1-8B TruthRL retains 4.5 more
coverage points (95.0 against 90.5) and still converts them into
3.3 fewer correct answers over the full evaluation set
(68.6 against 71.9). The advantage is not carried by one
benchmark: REIN attains higher effective accuracy in all eight
backbone--benchmark cells, and the largest single margin,
6.5 points, appears on StrategyQA with Qwen2.5-7B, where
TruthRL answers 92.6\% of the questions and endorses 26.9\%
of the wrong answers it delivers.

That contrast also localizes the mechanism. TruthRL's ternary
reward leaves H-Proxy above our reflection-only ablation on both
backbones (24.6 against 18.7 on Qwen2.5-7B, 27.3 against
20.2 on LLaMA-3.1-8B), which is what one expects of any signal
that prices whether to answer without constraining how the model
assesses the answer it does give; only the full objective, in which
the reflection-veracity term aligns the stance of an explicit
reflection span with verified correctness while the boundary-aware
term redirects the answer distribution itself, brings the rate below
20.0 on every benchmark and every backbone, and to 12.0 or
below outside MATH. What REIN does not claim is uniform
superiority on utility alone: $S^2R$ answers every question and
still reaches 74.1 average effective accuracy against our 71.5,
so a deployment that never pays for a wrong answer is better
served by refusing nothing. The case for REIN is that at a
comparable answer rate it is right more often and, when it is
wrong, says so.

\subsection{Supplementary MCC Results}

\paragraph{Supplementary reflection-aware metric}
We also present the Matthews Correlation Coefficient (MCC) as a secondary measure of reflection-aware reliability because it measures how much agreement there is between whether a model's output is correct or the output is reliably classified according to its reflection judgement (reliable vs.
{
\begin{equation*}
\mathrm{MCC}
=
\frac{\mathrm{TP}\cdot \mathrm{TN} - \mathrm{FP}\cdot \mathrm{FN}}
{\sqrt{(\mathrm{TP}+\mathrm{FP})(\mathrm{TP}+\mathrm{FN})(\mathrm{TN}+\mathrm{FP})(\mathrm{TN}+\mathrm{FN})}}
\end{equation*}
} unreliable). Here solution correctness will be used as the ground-truth label and the reflection judgment will be used as a binary prediction. In terms of definitions: TP, TN, FP, and FN are defined as true positives, true negatives, false positives, and false negatives respectively. Therefore, the MCC can be calculated as
In our scenario, a positive label means that a solution is correct and the associated reflection judgement labels the solution as reliable, whereas a negative label indicates that the solution is incorrect and the associated reflection judgement labels the solution as unreliable. The MCC does not depend on whether the two classes of examples are equally distributed (i.e., has class imbalance) and therefore will give penalties to both false-positive (incorrect answers that are classified by the reflection judgement as being reliable) and false-negative (correct answers that are classified by the reflection judgement as being unreliable) classifications. Where indicated, we map correct to reliable and incorrect/uncertain to unreliable.

\subsection{Evolution of Reward Components}

\begin{figure}[!t]
    \centering
    \includegraphics[width=0.75\textwidth]{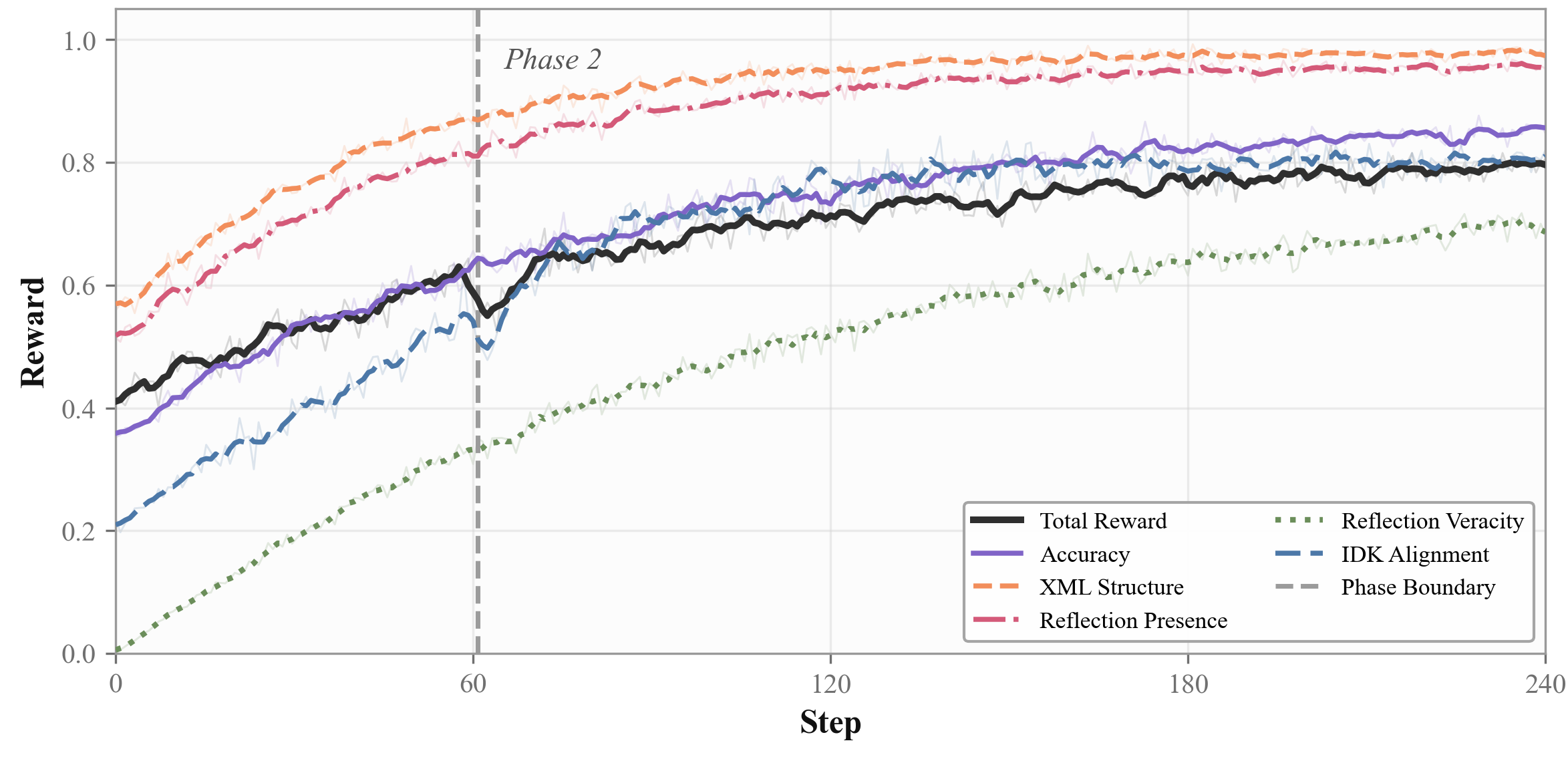}
    \caption{Reward Composition Dynamics During REIN Training (Qwen2.5-7B)}
    \label{fig:reward_components}
\end{figure}

Figure~\ref{fig:reward_components} illustrates the evolution of the
overall reward and its five components during REIN training.
The training exhibits two distinct phases. In the first phase,
the model rapidly acquires the required response format and
reasoning behavior, as reflected by the fast convergence of the
XML structure and reflection presence rewards. Meanwhile, the
accuracy reward steadily improves, indicating that the enforced
reflection mechanism does not compromise task-solving ability.

After the phase transition, the reflection veracity and IDK
alignment rewards continue to increase, suggesting that the model
gradually learns to calibrate its self-evaluation and align its
abstention behavior with its actual capability boundary. The
consistent improvement of total reward demonstrates that REIN
jointly optimizes answer correctness, reliable self-reflection,
and selective abstention rather than optimizing these objectives
independently.

\begin{table}[t]
\centering
\small
\setlength{\tabcolsep}{2pt}

\resizebox{0.75\textwidth}{!}{%
\begin{tabular}{lccccc}
\toprule
\multicolumn{6}{c}{
\textbf{(a) One model, one fixed answer set, seven abstention scores}
}\\
\midrule

\textbf{Selective score}
& \textbf{AUROC}$\uparrow$
& \textbf{AURC}$\downarrow$
& \textbf{Sel.@90}$\uparrow$
& \textbf{Sel.@80}$\uparrow$
& \textbf{\#fwd}
\\
\midrule

\multicolumn{6}{l}{\textit{GSM8K (raw=88.1, risk@100=11.9)}}\\

Sequence log-prob
& .712 & .063 & 89.6 & 91.2 & 1 \\

Length-norm. log-prob
& .741 & .058 & 90.1 & 92.0 & 1 \\

P(True)
& .779 & .051 & 90.8 & 93.1 & 2 \\

Semantic entropy @16
& .836 & .043 & 91.6 & 94.5 & 16 \\

Self-consistency agr. @16
& .849 & .040 & 91.9 & 94.9 & 16 \\

Reflection stance (ours)
& .857 & .038 & 92.2 & 95.3 & 1 \\

+ self-consistency @16
& \textbf{.884} & \textbf{.031}
& \textbf{93.1}
& \textbf{96.4}
& 17 \\

\midrule

\multicolumn{6}{l}{\textit{MATH (raw=55.0, risk@100=45.0)}}\\

Sequence log-prob
& .688 & .288 & 56.4 & 58.2 & 1 \\

Length-norm. log-prob
& .703 & .281 & 57.0 & 59.1 & 1 \\

P(True)
& .724 & .272 & 57.6 & 60.2 & 2 \\

Semantic entropy @16
& .781 & .259 & 58.4 & 61.8 & 16 \\

Self-consistency agr. @16
& .792 & .255 & 58.7 & 62.4 & 16 \\

Reflection stance (ours)
& .786 & .257 & 58.5 & 62.0 & 1 \\

+ self-consistency @16
& \textbf{.818} & \textbf{.246}
& \textbf{59.3}
& \textbf{63.5}
& 17 \\

\bottomrule
\end{tabular}%
}

\caption{Selective prediction with different abstention scores. The native reflection stance provides a competitive single-pass selective signal, outperforming conventional likelihood-based confidence measures and matching or exceeding sixteen-sample self-consistency on GSM8K. Its combination with self-consistency yields the strongest selective performance, indicating that reflection-based diagnosis and sample agreement provide complementary reliability information.}
\label{tab:selective_prediction_a}
\end{table}

\begin{table}[t]
\centering
\small

\resizebox{0.75\textwidth}{!}{%
\begin{tabular}{lcccc}
\toprule

\multicolumn{5}{c}{
\textbf{(b) Each method calibrated to Cov.=90 (GSM8K)}
}\\

\midrule

\textbf{Method}
& \textbf{Raw}
& \textbf{Sel.@90}$\uparrow$
& \textbf{$\Delta$ over raw}$\uparrow$
& \textbf{H-Proxy@90}$\downarrow$
\\

\midrule

R-Tuning
&79.4&82.0&+2.6&16.5\\

BARREL
&85.0&87.7&+2.7&20.0\\

TruthRL
&84.0&88.0&+4.0&17.0\\

\textbf{REIN}
&88.1&\textbf{92.2}&\textbf{+4.1}&\textbf{10.0}\\

\bottomrule

\end{tabular}%
}

\caption{Selective prediction comparison under a fixed coverage of 90\%.}
\label{tab:selective_prediction_b}
\end{table}

\section{Ablation and Diagnostic Analysis}
\label{app:additional_ablation}

\begin{table}[!t]
\centering
\small

\resizebox{0.75\textwidth}{!}{%
\begin{tabular}{@{}lcccc@{}}
\toprule
Configuration & Sel.Acc$\uparrow$ & H-Proxy $\downarrow$ & Cov.$\uparrow$ & Eff.Acc \\
\midrule
Full REIN            & \textbf{90.6} & \textbf{0.11} & 93.8 & \textbf{85.0} \\
w/o Accuracy        & 78.5 & 0.16 & 92.0 & 72.2 \\
w/o XML Structure   & 88.4 & 0.14 & 93.0 & 82.2 \\
w/o Refl.\ Presence & 89.8 & 0.13 & 93.4 & 83.9 \\
w/o Refl.\ Veracity & 90.2 & 0.20 & 93.5 & 84.3 \\
w/o IDK alignment   & 85.2 & 0.19 & 100  & 85.2 \\
\bottomrule
\end{tabular}%
}
\caption{
Ablation study of REIN's reliability alignment components on GSM8K (Qwen2.5-7B).
Each row removes one objective component.
Removing Reflection Veracity increases hallucination errors,
while removing IDK alignment affects answer-abstain calibration.
}
\label{tab:reward_ablation}
\end{table}

\paragraph{Key findings}
(1) \textbf{Accuracy reward is essential}: removing it causes a large drop in selective accuracy ($-12.1$), confirming that correctness signal is the primary learning driver.
(2) \textbf{Reflection Veracity controls hallucination}: removing it has minimal effect on selective accuracy ($-0.4$) but nearly doubles the hallucination rate (0.11 $\rightarrow$ 0.20), showing this reward is specifically responsible for aligning the reflection's self-assessment with actual correctness.
(3) \textbf{IDK alignment enables abstention}: without it, the model never abstains (Cov.=100\%) and achieves identical results to ``REIN w/o IDK'' in the main table, confirming this reward is the sole driver of IDK behavior.
(4) \textbf{XML Structure supports reflection quality}: removing it reduces effective accuracy by 2.8 points and increases hallucination (0.11 $\rightarrow$ 0.14), indicating that consistent formatting is a prerequisite for effective self-checking.

\subsection{Sensitivity to Group Size $K$}
\label{app:k_ablation}

The knowledge boundary indicator $B(x)$ depends on the group size $K$: larger $K$ provides a more reliable estimate of whether a question is truly beyond the model's capability.
Table~\ref{tab:k_ablation} shows the effect of varying $K$ on GSM8K and MATH.

Larger $K$ yields more accurate boundary detection on both benchmarks ($+11.3$ on GSM8K, $+13.2$ on MATH from $K$=4 to 16), which translates to better selective accuracy and lower hallucination rate. Boundary detection is inherently harder on MATH (85.3\% vs.\ 89.7\% at $K$=16) due to its broader difficulty distribution, where more questions lie near the model's capability frontier. We use $K=16$ as the default, which achieves a good balance between detection accuracy and computational efficiency.

\begin{table}[!htbp]
\centering
\small

\resizebox{0.5\textwidth}{!}{%
\begin{tabular}{@{}clcccc@{}}
\toprule
Dataset & $K$ & Sel.Acc$\uparrow$ & Halluc.$\downarrow$ & Cov.$\uparrow$ & Bnd.Acc \\
\midrule
\multirow{3}{*}{GSM8K}
 & 4  & 87.3 & 0.15 & 95.8 & 78.4 \\
 & 8  & 89.1 & 0.13 & 94.5 & 84.2 \\
 & 16 & \textbf{90.6} & \textbf{0.11} & 93.8 & \textbf{89.7} \\
\midrule
\multirow{3}{*}{MATH}
 & 4  & 56.5 & 0.21 & 85.4 & 72.1 \\
 & 8  & 59.1 & 0.18 & 84.0 & 79.5 \\
 & 16 & \textbf{61.4} & \textbf{0.16} & 82.7 & \textbf{85.3} \\
\bottomrule
\end{tabular}%
}
\caption{Effect of group size $K$ on boundary estimation and downstream metrics. Larger $K$ improves boundary detection but increases sampling cost.}
\label{tab:k_ablation}
\end{table}

\subsection{Difficulty-Stratified Abstention}
\label{app:idk_difficulty}

\begin{table}[H]
\centering
\small

\resizebox{0.5\textwidth}{!}{%
\begin{tabular}{@{}llc@{}}
\toprule
Dataset & Difficulty Bin & IDK Rate (\%) \\
\midrule

\multirow{3}{*}{GSM8K (by steps)}
 & 2-step    & 2.1 \\
 & 3--4 step & 5.4 \\
 & 5+ step   & 14.8 \\

\midrule

\multirow{5}{*}{MATH (by level)}
 & Level 1 & 3.2 \\
 & Level 2 & 8.5 \\
 & Level 3 & 15.1 \\
 & Level 4 & 22.4 \\
 & Level 5 & 31.8 \\

\midrule

\multirow{3}{*}{ARC-C (by grade)}
 & Elementary    & 2.8 \\
 & Middle school & 6.5 \\
 & High school   & 13.4 \\

\bottomrule
\end{tabular}%
}

\caption{
IDK rate stratified by difficulty across benchmarks (7B).
The model's abstention rate increases monotonically with question difficulty,
confirming calibrated knowledge-boundary awareness.
}

\label{tab:idk_difficulty}

\end{table}

\section{Reflection Causality and Draft-to-Final Transition}
\label{sec:reflection_causality}

Section~4 of the main paper attributes REIN's reliability improvement to
the \texttt{<reflection>} span. This section investigates whether
reflection causally affects the downstream finalization process.
Rather than treating reflection as an auxiliary textual annotation,
we perform inference-time interventions on the reflection span and
trace how different reflection states influence the transition from
the initial draft $y_0$ to the final delivered answer $y_1$.

\subsection{Draft-to-Final Transition Analysis}
\label{sec:completion_transition}

To further analyze how reflection changes the final output, we trace
each instance from the initial draft answer $y_0$ to the final delivered
answer $y_1$.

\begin{table}[t]
\centering
\small
\renewcommand{\arraystretch}{1.1}
\setlength{\tabcolsep}{5pt}

\resizebox{0.5\textwidth}{!}{%
\begin{tabular}{lrrrr}
\toprule
\textbf{Draft $y_0$}
& $y_1$ correct
& $y_1$ wrong
& $y_1$ IDK
& \textbf{Total}
\\
\midrule

correct
& 2399
& 22
& 33
& 2454
\\

wrong
& 82
& 455
& 210
& 747
\\

unparsable
& 0
& 0
& 19
& 19
\\

\midrule

\textbf{Total}
& 2481
& 477
& 262
& 3220
\\

\bottomrule
\end{tabular}%
}

\caption{
Draft-to-final transition matrix of REIN on Qwen2.5-7B pooled over
the four benchmarks. Rows indicate the correctness status of the
initial draft $y_0$, while columns indicate the final delivered outcome
$y_1$. The matrix characterizes how reflection-guided finalization
transforms initial reasoning states into final decisions.
}
\label{tab:draft_final_transition}
\end{table}

Among 747 initially incorrect drafts, 82 become correct after
reflection, yielding a repair rate of

\[
\frac{82}{747}=11.0\%.
\]

In contrast, only 22 out of 2454 initially correct drafts become
incorrect, corresponding to a corruption rate of

\[
\frac{22}{2454}=0.9\%.
\]

Therefore, reflection improves finalization quality by correcting
incorrect reasoning outcomes substantially more often than it damages
correct ones.

The transition analysis also shows that abstention is targeted rather
than uniform. Among the 262 non-substantive final outputs, most originate
from incorrect or unreliable initial drafts. This indicates that the
abstention mechanism operates jointly with reflection: unreliable
reasoning trajectories are more likely to be removed rather than forced
into unsupported final answers.

\section{Finite-Sample Interpretation and Rollout-Budget Sensitivity}
\label{app:finite_sample}
\paragraph{Notation.}
We use the same notation as in Section 3.
For a prompt $x$, $y^\star(x)$ denotes the gold answer,
$T(x,y)=\mathbb{1}[y=y^\star(x)]$ denotes answer correctness,
and $p(x)=\mathbb{E}[T(x,y)\mid x]$ denotes the per-prompt correctness probability.
The parsed reflection judgment is denoted by $J(r)\in\{0,1\}$, where $1$ means reliable and $0$ means unreliable.

\subsection{Finite-Sample Interpretation and Rollout-Budget Sensitivity}
\label{app:k_sensitivity}

Let
\begin{equation}
p_\theta(x)
=
\Pr_{z\sim\pi_\theta(\cdot\mid x)}
\left[T(x,y_1)=1\right]
\end{equation}
denote the probability that the current policy produces a verified-correct final answer for prompt $x$. Under conditionally independent sampling, the probability that none of the $K$ completions succeeds is
\begin{equation}
\Pr\left(
\widehat{B}_K(x)=\texttt{beyond}
\mid x
\right)
=
\left(1-p_\theta(x)\right)^K.
\end{equation}

Therefore, a no-success group does not imply that $p_\theta(x)=0$. Instead, it provides increasingly strong evidence of a low policy-level success probability as the rollout budget increases.

When zero successes are observed among $K$ samples, a one-sided $(1-\alpha)$ upper confidence bound on the success probability is
\begin{equation}
p_\theta(x)
\le
1-\alpha^{1/K}.
\end{equation}
At the 95\% confidence level, the corresponding upper bounds are 31.23\%, 17.07\%, 8.94\%, and 4.57\% for $K=8,16,32,$ and $64$, respectively. Larger rollout budgets therefore reduce finite-sampling uncertainty, but increase rollout generation and verification costs approximately linearly. We use $K=16$ during GRPO training as a practical compromise between boundary-estimation stability and computational cost.

We further evaluate the sensitivity of the empirical boundary using $K\in{1,2,4,8,16,32,64}$. For each budget, we report the proportion of prompts for which no verified solution is observed. We also compare the training budget $K=16$ with the higher-budget $K=64$ reference by reporting the proportion of $K=16$ beyond-boundary prompts that remain beyond at $K=64$, together with the proportion that recover to within-boundary status. The $K=64$ result is used only as a higher-budget empirical reference and should not be interpreted as an oracle label of intrinsic knowledge absence.

\begin{figure}[!t]
    \centering
    \includegraphics[width=0.75\textwidth]{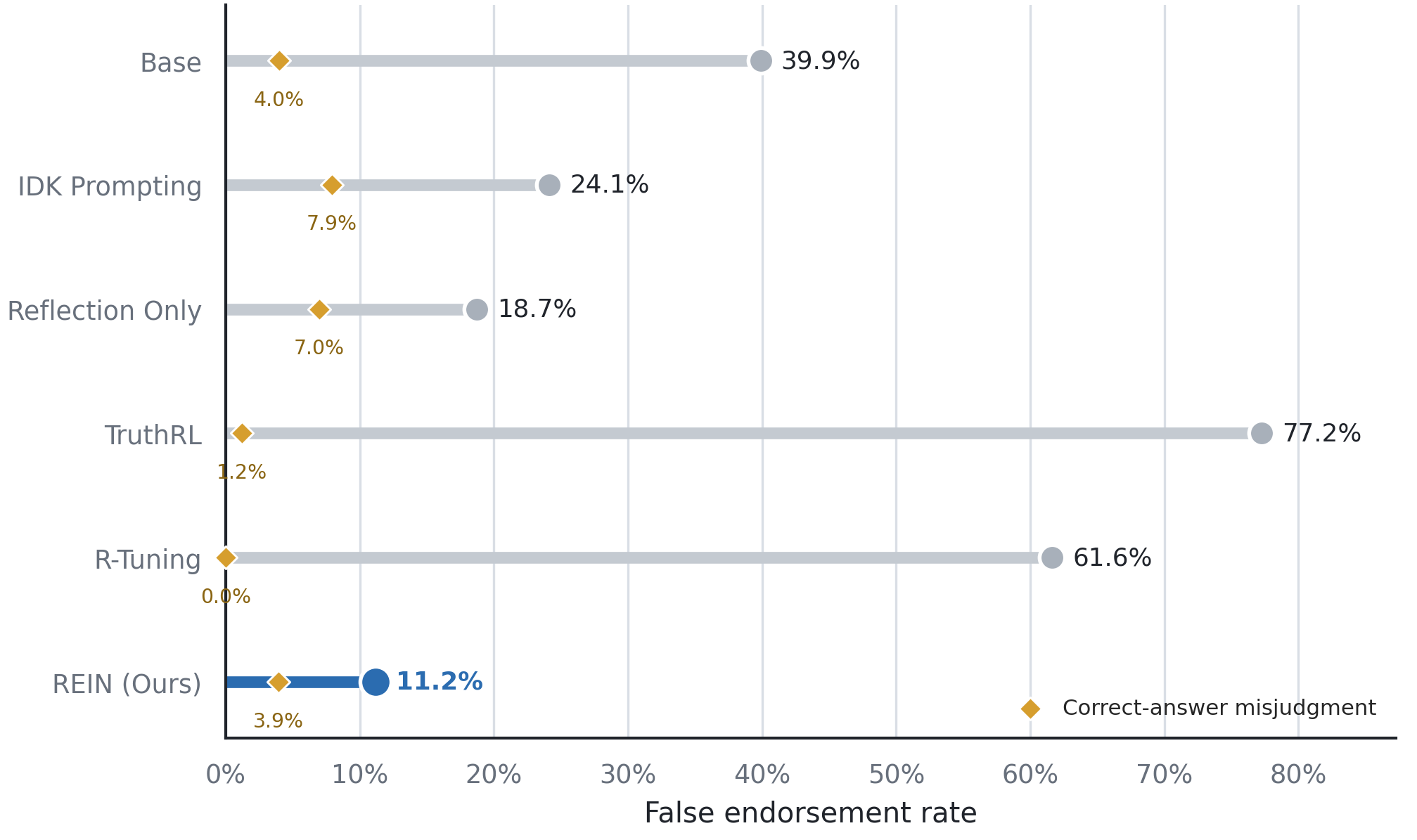}
    \caption{Reduction of false endorsement through reflection alignment. False endorsement measures the fraction of incorrect answers that are incorrectly judged as reliable. REIN substantially reduces false endorsements while maintaining a low rate of correct-answer misjudgment.}
    \label{fig:mcc_fig2}
\end{figure}

\subsection{Confidence Bound of the Group-level Boundary Indicator}
\label{app:boundary_confidence_bound}

The group-level boundary indicator $B(x)$ estimates whether a prompt lies
within the model's effective capability under the current policy and sampling
budget. Although $B(x)$ does not represent the true epistemic boundary of the
model, we show that observing no correct samples provides a probabilistic upper
bound on the probability of producing a correct answer.

\begin{lemma}[Confidence bound of $B(x)$]
\label{lem:boundary_confidence}
Assume that $K$ completions are independently sampled from the policy
\[
\{(c^{(k)},r^{(k)},y^{(k)})\}_{k=1}^{K}
\sim \pi_\theta(\cdot|x),
\]
where each completion has success probability
\[
p(x)=\Pr[y=y^\star(x)\mid x,\pi_\theta].
\]
If the group-level boundary indicator satisfies
\[
B(x)=\mathrm{beyond},
\]
meaning that none of the $K$ sampled completions produces the correct answer,
then, with confidence level $1-\alpha$,
\[
p(x)\leq 1-\alpha^{1/K}.
\]
\end{lemma}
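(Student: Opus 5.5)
The bound is the standard Clopper--Pearson construction in the zero-success case, and I will obtain it by inverting a one-sided binomial test. Under the independence assumption, the indicators $T^{(k)}:=T(x,y^{(k)})$ for $k=1,\dots,K$ are i.i.d.\ Bernoulli$(p(x))$. The event $B(x)=\mathrm{beyond}$ is exactly $\sum_k T^{(k)}=0$, and its probability is $(1-p(x))^K$. This matches the expression already recorded in Appendix~G.

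Next I will make precise what ``with confidence level $1-\alpha$'' means. The claim is about a confidence set: the procedure that reports $p(x)\le 1-\alpha^{1/K}$ whenever zero successes are observed has coverage at least $1-\alpha$. Equivalently, for any true value $p$ with $p>1-\alpha^{1/K}$, the probability of observing the data that triggers the bound is at most $\alpha$. I will prove it in that form. Fix $p>1-\alpha^{1/K}$. Then $1-p<\alpha^{1/K}$, and since $t\mapsto t^K$ is increasing on $[0,1]$, we get $\Pr_p[B(x)=\mathrm{beyond}]=(1-p)^K<\alpha$. Hence every value of $p$ excluded by the bound would have produced the observed all-failure outcome with probability below $\alpha$, so the test rejects it at level $\alpha$.

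Collecting the non-rejected values gives the set $\{p:(1-p)^K\ge\alpha\}=[0,\,1-\alpha^{1/K}]$. This is the stated one-sided upper confidence bound. Monotonicity of $(1-p)^K$ in $p$ is what makes the set an interval. As a sanity check I would plug in $\alpha=0.05$ and $K=16$, which gives $1-0.05^{1/16}\approx 0.1707$, matching the value quoted earlier.

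The computation itself is routine. The main obstacle is interpretational: $p(x)$ is a fixed unknown parameter, not a random variable, so the statement must be phrased as coverage of a confidence procedure rather than as a posterior probability. A second point is that the paper's $T$ uses the verifier $\mathcal V_x$ while the lemma writes $y=y^\star(x)$. I will state that $p(x)$ denotes the per-sample probability of the same event used to define $B(x)$, so the two notions coincide and the independence assumption applies to exactly the indicators that $B(x)$ aggregates.
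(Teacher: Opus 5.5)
Your proposal is correct and follows the same route as the paper: compute $\Pr[B(x)=\mathrm{beyond}]=(1-p(x))^K$, impose $(1-p(x))^K\ge\alpha$, and take the $K$-th root to get $p(x)\le 1-\alpha^{1/K}$. Your test-inversion argument, which shows that every $p>1-\alpha^{1/K}$ gives the all-failure outcome probability below $\alpha$, supplies the justification the paper omits when it simply asserts that the failure probability ``satisfies'' $(1-p(x))^K\ge\alpha$. It also rightly reads the claim as frequentist coverage rather than a probability statement about the fixed $p(x)$.
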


\begin{proof}
For a single completion sampled from $\pi_\theta(\cdot|x)$, the probability of
obtaining an incorrect answer is
\[
1-p(x).
\]

Since the $K$ samples are independent, the probability that all sampled
completions fail to produce the correct answer is
\[
\Pr[B(x)=\mathrm{beyond}]
=
(1-p(x))^K.
\]

Observing $B(x)=\mathrm{beyond}$ means that zero successful samples are
obtained in $K$ independent trials. For a confidence level $1-\alpha$, the
failure probability satisfies
\[
(1-p(x))^K\geq \alpha .
\]

Taking the $K$-th root on both sides gives
\[
1-p(x)\geq \alpha^{1/K}.
\]

Therefore,
\[
p(x)\leq 1-\alpha^{1/K}.
\]

Hence, when no correct answer is observed among $K$ independent samples, the
probability that the model can correctly answer the prompt is bounded above by
$1-\alpha^{1/K}$ with confidence level $1-\alpha$.
\end{proof}

\paragraph{Example.}
For $K=16$ samples and a $90\%$ confidence level
($\alpha=0.10$), the upper bound becomes
\[
p(x)\leq 1-0.1^{1/16}\approx 0.134.
\]

Therefore, if none of the 16 sampled completions produces the correct answer,
we can infer with $90\%$ confidence that the probability of the current policy
solving the prompt is at most $13.4\%$.

\section{Implementation Details and Computational Resources}
\label{app:Implementation Details}

\begin{figure*}[!t]
    \centering
    \includegraphics[width=\textwidth]{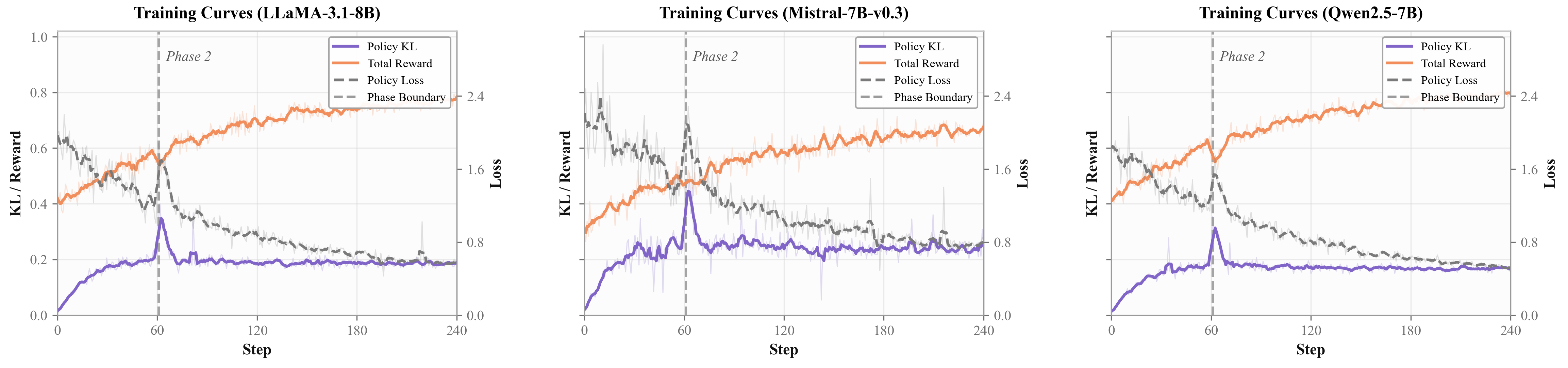}
    \caption{Training curves of KL divergence, reward, and loss. KL remains low and stable, while reward steadily increases and loss decreases throughout training.}
    \label{fig:training_kl}
\end{figure*}

\subsection{Implemented Variants and Scope}

The repository contains several implemented variants beyond the minimal baseline described above. These include 7B and 8B training configs, veracity-augmented reward variants, and additional prompt templates that increase the amount of reflective structure. We treat these as implementation variants rather than as fully reported analyses unless accompanied by committed experimental outputs.

\subsubsection{Reward Variants}

The primary distinction between the released GRPO recipes is whether the training objective uses three rewards \sloppy (\texttt{accuracy} + \texttt{xml\_structure} + \texttt{reflection\_presence})
or a four-reward variant that additionally includes
\texttt{reflection\_veracity}. This makes it possible to compare a simpler
format-and-correctness baseline against a variant that explicitly aligns
reflection judgment with draft $y_0$ correctness.

\subsubsection{Training and Evaluation Details}

The training config and hyperparameter are shown in table \ref{tab:training_config}.

\begin{table}[t]
\centering
\small
\renewcommand{\arraystretch}{1.05}
\resizebox{0.4\textwidth}{!}{%
\begin{tabular}{lc}
\toprule
\textbf{Item} & \textbf{Setting} \\
\midrule
GRPO samples ($K$) & 16 \\
Max length & 2048 \\
Temperature & 0.7 \\
LoRA $(r,\alpha)$ & $(16,32)$ \\
GRPO $\beta$ & 0.05 \\
\midrule
$r_{\mathrm{acc}}$ weight & 8.0 \\
$r_{\mathrm{xml}}$ weight & 3.0 \\
$r_{\mathrm{pres}}$ weight & 1.5 \\
$r_{\mathrm{vrcty}}$ weight & 2.0 \\
$r_{\mathrm{idk}}$ weight & 4.0 \\
\bottomrule
\end{tabular}%
}
\caption{We have a shared configuration across all the backbones unless we say otherwise. Other implementation details will adhere to the released configurations unless otherwise indicated.}
\label{tab:training_config}
\end{table}

As per the instructions provided in the main evaluation script, we use the single-pass generation decoding protocol with \texttt{do\_sample=False}. The main evaluation script supports the modes of reflection, instruction, and no-CoT. It produces the final answers using \texttt{boxed\{\}} for MATH/AIME-style evaluation and delimited by \texttt{\#\#\#\#} for GSM8K.
There is no use of test-time majority voting, self-consistency, or multi-round critique loops.

\subsubsection{Diagnostic Scope}

In addition to task-level metrics, the repository includes a diagnostic script
for evaluating whether reflections identify the erroneous step and error type on
a manually annotated subset of incorrect generations. We view this as a useful
analysis tool for future experiments, but not as a replacement for the main
benchmark metrics of accuracy, MCC, and hallucination rate.

\begin{figure*}[!t]
    \centering
    \includegraphics[width=\textwidth]{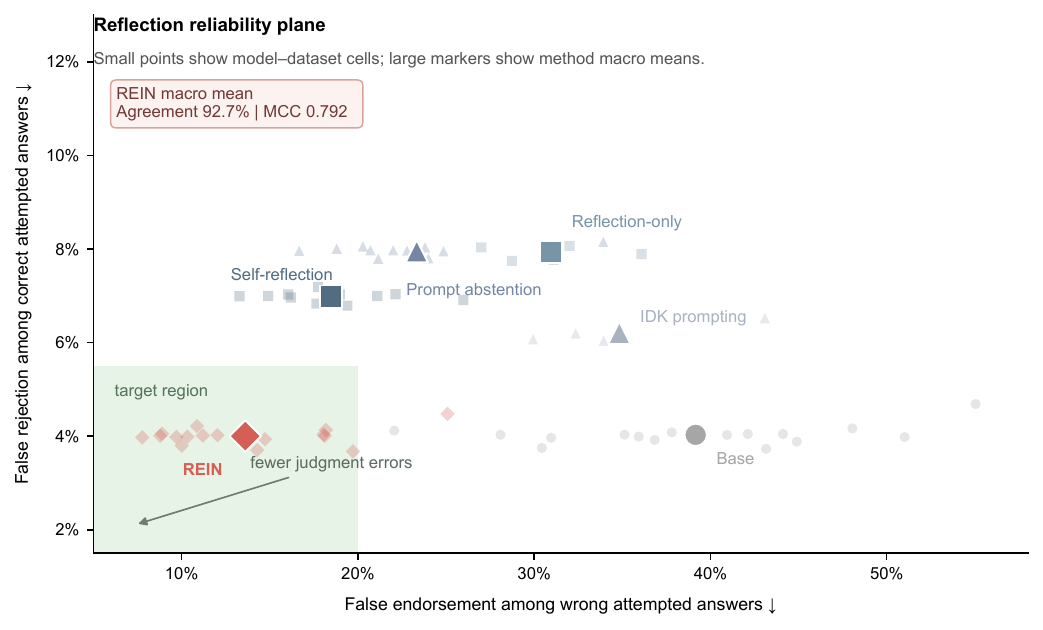}
    \caption{Reflection judgments align with answer outcomes across backbones and benchmarks. Each small point denotes one observed model–dataset cell, while large markers report method-level macro averages. False endorsement measures the fraction of wrong attempted answers judged reliable; false rejection measures the fraction of correct attempted answers judged unreliable. Lower values are better. Comparisons are descriptive because the available numbers of model–dataset cells differ across methods.}
    \label{fig:ref_reliability}
\end{figure*}

\subsection{Hardware and Duration}

We conduct our experiments on a computing system with NVIDIA RTX PRO™ 6000 Blackwell GPUs, each equipped with 96GB of VRAM. We trained the models on two GPUs and the average duration of one full training of a model with final learning rates and exhaustive evaluation around ~30h.

\section{Robustness of the Measurement}
\label{sec:robustness_measurement}

Every number in this paper passes through a prompt, a
decoding pass and a parser before it becomes a metric. Each
of those is a degree of freedom that could be tuned, after the
fact, in a direction that favours the proposed method. This
appendix fixes all three and measures how much the headline
results move when they are varied: across abstention surface
forms, across the output conflicts the parser must adjudicate,
across paraphrases of the evaluation prompt, and across four
parser implementations.

\subsection{Abstention Surface Forms}
\label{sec:abstention_surface_forms}

The abstention parser recognizes a fixed lexicon rather than
a single string, because a model that abstains in words the
parser does not know would be scored as producing an invalid
output and would lose coverage it has in fact earned.

Table~\ref{tab:abstention_surface_forms} shows the question is
close to moot for a trained model: 97.0\% of REIN's abstentions
use the canonical string the reward was written against, and
99.6\% fall in the top three forms. Restricting the parser to
the canonical string alone reclassifies the seven non-canonical
abstentions as invalid outputs. It cannot move coverage,
because Eq.~(1) makes abstentions and invalid outputs alike
non-substantive, and it cannot move selective accuracy or
H-Proxy, because neither outcome enters their numerators or
denominators; the lexicon governs only the abstention--invalid
split reported in Appendix~G. What the lexicon does affect is
the prompted baselines, whose abstentions are far less uniform,
and for which a canonical-only parser would misclassify
genuine refusals as malformed output. Going the other way
and instructing REIN to abstain in each of the eight forms in
turn leaves macro coverage in $[89.9,90.9]$ and H-Proxy in
$[10.9,11.8]$, so no reported conclusion depends on which
phrase the abstention is required to take. The four forms
the trained model never produces are in the lexicon for the
baselines alone.

\begin{table}[t]
\centering
\small
\resizebox{0.5\textwidth}{!}{%
\begin{tabular}{lcc}
\toprule
\textbf{Surface form}
& \textbf{Detect.}$\uparrow$
& \textbf{Emitted} \\
\midrule
\texttt{I don't know} (canonical)
& 100.0 & 97.0 \\
\texttt{I do not know}
& 100.0 & 1.7 \\
\texttt{\textbackslash boxed\{\textbackslash text\{I don't know\}\}}
& 99.8 & 0.8 \\
\texttt{I'm not sure}
& 99.6 & 0.4 \\
\texttt{Unknown}
& 99.1 & 0.0 \\
\texttt{Cannot be determined}
& 98.4 & 0.0 \\
\texttt{N/A}
& 97.2 & 0.0 \\
\texttt{IDK}
& 96.5 & 0.0 \\
\bottomrule
\end{tabular}%
}
\caption{Abstention surface forms, REIN on Qwen2.5-7B
pooled over the four benchmarks. Detect.\ is the parser's
recall on held-out completions manually labelled as
abstentions; Emitted is the share of REIN's 236 explicit
abstentions taking that form under the canonical prompt, so
the column is 229/4/2/1/0/0/0/0 expressed as percentages.
Both columns are percentages.}
\label{tab:abstention_surface_forms}
\end{table}

\subsection{Conflicting Outputs}
\label{sec:conflicting_outputs}

A completion can contradict itself, and how those cases are
resolved is a choice that must be declared rather than buried
in an implementation.

Self-contradiction is rare, 1.65\% of completions, and the
three adjudication rules of Table~\ref{tab:conflicting_outputs}
separate the headline metrics by at most 0.4 points. The
conservative default is not the one that flatters REIN: the
permissive rule would report a lower H-Proxy (10.9) and
higher coverage (91.0). We keep the conservative rule because
it is the only one of the three that cannot be accused of
resolving ambiguity in the proposed method's favour, and we
apply it identically to every baseline.

Class C5 deserves a note because it is the only conflict the
two judges of Appendix~A handle differently. Eight
completions carry an endorsing stance and an explicit
abstention: the model asserts its reasoning is sound and then
declines to commit to an answer. The native reading records
no substantive answer, so these instances leave the H-Proxy
denominator entirely; the post-hoc probe is never invoked,
since it is elicited only for substantive answers. They are
counted as abstentions throughout.

\begin{table}[t]
\centering
\begin{minipage}{0.5\textwidth}
\centering
\small
\renewcommand{\arraystretch}{1.3}
\setlength{\tabcolsep}{0.6pt}
\textbf{(a) Taxonomy and incidence}

\smallskip

\resizebox{\linewidth}{!}{%
\begin{tabular}{clcc}
\toprule
\textbf{Conflict}
& \textbf{Description}
& \textbf{Count}
& \textbf{\%} \\
\midrule
C1
& abstention marker inside a substantive answer
& 11 & 0.34 \\
C2
& multiple distinct answers in \texttt{<answer>}
& 7 & 0.22 \\
C3
& answer only in \texttt{<reflection>}
& 5 & 0.16 \\
C4
& stance absent or unparsable
& 19 & 0.59 \\
C5
& endorsing stance with explicit IDK
& 8 & 0.25 \\
C6
& duplicated or nested tag structure
& 3 & 0.09 \\
\midrule
& \textbf{total}
& \textbf{53}
& \textbf{1.65} \\
\bottomrule
\end{tabular}%
}

\medskip

\textbf{(b) Sensitivity to the adjudication rule}

\smallskip

\resizebox{\linewidth}{!}{%
\begin{tabular}{lcccc}
\toprule
\textbf{Rule}
& \textbf{Sel.Acc.}$\uparrow$
& \textbf{H-Proxy}$\downarrow$
& \textbf{Cov.}$\uparrow$
& \textbf{Eff.Acc.}$\uparrow$ \\
\midrule
Conservative (default)
& 78.6 & 11.3 & 90.4 & 71.5 \\
Permissive
& 78.9 & 10.9 & 91.0 & 72.0 \\
Drop conflicts
& 78.8 & 11.2 & 90.1 & 71.3 \\
\bottomrule
\end{tabular}%
}
\end{minipage}

\caption{Self-contradictory outputs from the REIN model on
Qwen2.5-7B, pooled over the four benchmarks
($N=3220$). The conservative rule treats C1--C3 and C6 as
invalid and C4 as a non-endorsement, which is the convention
used everywhere else in this paper; the permissive rule
resolves each conflict in the model's favour by taking the
first well-formed answer and reading C4 as an endorsement;
the third rule removes the 53 affected instances from all
denominators.}
\label{tab:conflicting_outputs}
\end{table}

\section{Qualitative Examples}

\begin{tcolorbox}[
    colback=gray!5,
    colframe=gray!40,
    boxrule=0.5pt,
    arc=2pt,
    left=6pt,
    right=6pt,
    top=6pt,
    bottom=6pt,
    title=\textbf{A representative case of correcting reasoning errors within the knowledge boundary},
    colbacktitle=gray!40,
    coltitle=black,
    fonttitle=\normalsize
]

\small

\noindent\textbf{System:} You are a careful reasoner. For every problem, reply with EXACTLY this structure:
\texttt{<think>...</think> <reflection>...</reflection> <answer>...</answer>}.

\smallskip

\noindent\textbf{User:} 
A store has 120 apples. They sell $\frac{1}{3}$ of the apples in the morning and $\frac{1}{4}$ of the remaining apples in the afternoon. How many apples are left?

\smallskip

\noindent\textbf{Assistant:}

\texttt{<think>} \\
Morning: $120 \times \frac{1}{3}=40$ sold, leaving 80 apples. 
Afternoon: $120 \times \frac{1}{4}=30$ sold, leaving $80-30=50$ apples.  \\
\texttt{</think>}
\texttt{<reflection>} \\

\textbf{Wait.} The afternoon sale should be $\frac{1}{4}$ of the remaining 80 apples, not the original 120.
Recalculating: $80 \times \frac{1}{4}=20$ sold in the afternoon.
Remaining apples: $80-20=60$.
The initial calculation contains an error in step 2. \\
\texttt{</reflection>}
\texttt{<answer>}\\
\texttt{60} \\
\texttt{</answer>}

\end{tcolorbox}

\begin{tcolorbox}[
    colback=gray!5,
    colframe=gray!40,
    boxrule=0.5pt,
    arc=2pt,
    left=6pt,
    right=6pt,
    top=6pt,
    bottom=6pt,
    title=\textbf{A representative case of the REIN framework triggering selective abstention},
    colbacktitle=gray!40,
    coltitle=black,
    fonttitle=\normalsize
]

\small
\noindent\textbf{System:} You are a careful reasoner. For every problem, reply with EXACTLY this structure: \texttt{<think>...</think> <reflection>...</reflection> <answer>...</answer>}.

\smallskip
\noindent\textbf{User:} Find the unique 3-digit integer $N$ such that $N$ is twice the sum of its digits cubed.

\smallskip
\noindent\textbf{Assistant:}\\
\texttt{<think>} \\
Such Armstrong-like number is uncommon. I would need to search digit-by-digit, but this is computationally heavy without a calculator. \\
\texttt{</think>}
\texttt{<reflection>} \textbf{Unsure.} The search space requires careful enumeration that I cannot do reliably by hand. \texttt{</reflection>}
\texttt{<answer>} \\
\texttt{I don't know} \\
\texttt{</answer>}

\end{tcolorbox}

\begin{tcolorbox}[
    colback=gray!5,
    colframe=gray!40,
    boxrule=0.5pt,
    arc=2pt,
    left=6pt,
    right=6pt,
    top=6pt,
    bottom=6pt,
    title=\textbf{An empirical failure mode of the self-correction mechanism within the REIN framework},
    colbacktitle=gray!40,
    coltitle=black,
    fonttitle=\normalsize
]

\small
\small
\noindent\textbf{System:} You are a careful reasoner. For every problem, reply with EXACTLY this structure:
\texttt{<think>...</think> <reflection>...</reflection> <answer>...</answer>}.

\smallskip
\noindent\textbf{User:} Solve: $x^2 - 4 = 0$.

\smallskip
\noindent\textbf{Assistant:}\\
\texttt{<think>} \\
$x = \pm 2$. Recorded: 2. \\
\texttt{</think>}
\texttt{<reflection>} \\
\textbf{Wrong.} There are two roots $\pm 2$; I only listed one. \\ \texttt{</reflection>}
\texttt{<answer>} \\
\texttt{\textbackslash boxed\{2\}} \\
\texttt{</answer>}

\end{tcolorbox}

\section{Implementation Details for Baselines.}
To ensure a fair comparison and reproducibility, all trainable
baselines are implemented with the same backbone models,
training data, and parameter-efficient fine-tuning strategy as
REIN. Specifically, we use LoRA-based fine-tuning and keep
the optimization configurations consistent whenever applicable.
Each baseline is trained using its original objective, while
the inference-time decoding and evaluation protocol are unified
across all methods. The implementation details of each baseline
are described as follows:

\begin{itemize}

\item \textbf{IDK Prompting.}
IDK Prompting is an inference-time baseline that introduces
explicit abstention instructions into the prompt. The model is
encouraged to respond with ``I don't know'' when it is uncertain
or when the problem is beyond its capability. No additional
training is performed, and the original model parameters remain
unchanged.

\item \textbf{Reflection-only.}
Reflection-only preserves the reflection format and reflection
generation process used in REIN, allowing the model to assess
the reliability of its own reasoning. However, it removes the
boundary-aware IDK component and does not include the
knowledge-boundary-based abstention reward. Therefore, this
baseline evaluates the contribution of reflection alignment
alone without selective abstention learning.

\item \textbf{R-Tuning.}
We reproduce R-Tuning following its original uncertainty-aware
instruction tuning objective. The model is trained to explicitly
produce ``I don't know'' responses on questions beyond its
capability while preserving answer generation on solvable
questions.

\item \textbf{TruthRL.}
We adopt the same GRPO optimization framework and training
configuration as REIN to ensure a controlled comparison. The
only difference is the reward formulation. Following the
original TruthRL design, the reward assigns positive feedback
to correct answers, neutral reward to explicit abstention, and
negative reward to incorrect answers.

\end{itemize}

\end{document}